\def\BibTeX{{\rm B\kern-.05em{\sc i\kern-.025em b}\kern-.08em
		T\kern-.1667em\lower.7ex\hbox{E}\kern-.125emX}}
\newtheorem{theorem}{Theorem}
\newtheorem{lemma}{Proposition}
\newtheorem{proposition}{Proposition}
\newcommand{\model}{\textsc{NodeFormer}\xspace}
\newcommand{\std}[1]{{\scriptsize #1}}
\title{Align All the Nodes: Scalable Learning of Flexible Latent Structure for Graph Neural Networks}
\title{Uniting All the Others for One's Own Prediction: \\Scalable and Flexible Graph Structure Learning}
\title{Towards Scalable (All-Pair) Message Passing for \\Node Classification beyond Explicit Topology}
\title{\model: Scaling Graph Structure Learning for Efficient Transformer on Large Graphs}
\title{NodeFormer: Scaling Arbitrary Structure Learning for Efficient Transformer on Large Graphs}
\title{NodeFormer: Structure Learning without Explicit Aggregation for Efficient Transformer on Large Graphs}
\title{NodeFormer: Linearizing Transformer on Large Graphs with Scalable Latent Structure Learning}
\title{NodeFormer: A Scalable Graph Structure Learning Transformer for Node Classification}
\author{
  Qitian Wu$^{1}$, Wentao Zhao$^{1}$, Zenan Li$^{1}$, David Wipf$^2$, Junchi Yan$^{1}$\thanks{Wentao Zhao and Zenan Li contribute equally. The SJTU authors are also with MoE Key Lab of Artificial Intelligence, SJTU. Junchi Yan is the correspondence author who is also with Shanghai AI Laboratory.} \\
  $^{1}$Department of Computer Science and Engineering,
  Shanghai Jiao Tong University\\
  $^2$Amazon Web Service, Shanghai AI Lab \\
  \texttt{\{echo740,permanent,emiyali,yanjunchi\}@sjtu.edu.cn, davidwipf@gmail.com} 
  %\\ \texttt{davidwipf@gmail.com} \\
%   Affiliation \\
%   Address \\
%   \texttt{email} \\
  % \AND
  % Coauthor \\
  % Affiliation \\
  % Address \\
  % \texttt{email} \\
  % \And
  % Coauthor \\
  % Affiliation \\
  % Address \\
  % \texttt{email} \\
  % \And
  % Coauthor \\
  % Affiliation \\
  % Address \\
  % \texttt{email} \\
}
\begin{document}

\maketitle

\begin{abstract}
    Graph neural networks have been extensively studied for learning with inter-connected data. Despite this, recent evidence has revealed GNNs' deficiencies related to over-squashing, heterophily, handling long-range dependencies, edge incompleteness and particularly, the absence of graphs altogether. While a plausible solution is to learn new adaptive topology for message passing, issues concerning quadratic complexity hinder simultaneous guarantees for scalability and precision in large networks. In this paper, we introduce a novel all-pair message passing scheme for efficiently propagating node signals between arbitrary nodes, as an important building block for a pioneering Transformer-style network for node classification on large graphs, dubbed as \textsc{NodeFormer}. Specifically, the efficient computation is enabled by a kernerlized Gumbel-Softmax operator that reduces the algorithmic complexity to linearity w.r.t. node numbers for learning latent graph structures from large, potentially fully-connected graphs in a differentiable manner. We also provide accompanying theory as justification for our design. Extensive experiments demonstrate the promising efficacy of the method in various tasks including node classification on graphs (with up to 2M nodes) and graph-enhanced applications (e.g., image classification) where input graphs are missing. The codes are available at \url{https://github.com/qitianwu/NodeFormer}.
\end{abstract}

\section{Introduction}
% Graph structure learning is important; latent structure for optimal message passing

% Large-scale structure learning; scalability issue for node-level problem

% Learning difficulties; increase degrees of freedom with the same observed data

% more vulnerable for over-fitting due to increase of model complexity

%While graph neural networks (GNNs) have been successfully applied to modeling
Relational structure inter-connecting instance nodes as a graph is ubiquitous from social domains (e.g., citation networks) to natural science (protein-protein interaction), where graph neural networks (GNNs)~\cite{scarselli2008gnnearly,GCN-vallina,graphsage,GAT} have shown promising power for leveraging such data dependence as geometric priors.
However, there arises increasing evidence challenging the core GNN hypothesis that propagating information along observed graph structures will necessarily produce better node-level representations for prediction on each individual instance node. Conflicts with this premise lead to commonly identified deficiencies with GNN message-passing rules w.r.t.~heterophily~\cite{h2gcn-neurips20}, over-squashing~\cite{oversquashing-iclr21}, long-range dependencies~\cite{longrange-icml18}, and graph incompleteness~\cite{IDLS-icml19}, etc.

Moreover, in graph-enhanced applications, e.g., text classification~\cite{text-aaai20}, vision navigation~\cite{image-cvpr21}, physics simulation~\cite{physics-icml20}, etc., graph structures are often unavailable though individual instances are strongly inter-correlated. A common practice is to artificially construct a graph via some predefined rules (e.g., $k$-NN), which is agnostic to downstream tasks and may presumably cause the misspecification of GNNs' inductive bias on input geometry (induced by the local feature propagation design).

Natural solutions resort to organically combining learning optimal graph topology with message passing. However, one critical difficulty is the \emph{scalability} issue with $O(N^2)$ (where $N$ denotes \#nodes) computational complexity, which is prohibitive for large networks (with $10K\sim 1M$ nodes). Some existing approaches harness neighbor sampling~\cite{Bayesstruct-aaai19}, anchor-based adjacency surrogates~\cite{IDGL-neurips20} and hashing schemes~\cite{kernalstruct-cikm18} to reduce the overhead; however, these strategies may sacrifice model precision and still struggle to handle graphs with million-level nodes. Another obstacle lies in the increased degrees of freedom due to at least an $N \times N$ all-pair similarity matrix, which may result in large combinatorial search space and vulnerability to over-fitting.

% Another obstacle lies in the increased degrees of freedom over original problems (which only targets a functional mapping from observations to labels). % While optimizing graph structures may presumably make extra room for performance gain in downstream tasks, it also requires additional cost that increases the problem hardness to a new order-of-magnitude. 
% More specifically, some prior arts, e.g., \cite{IDLS-icml19,varstruct-neurips20}, assume a parameterized all-pair adjacency $\tilde{\mathbf A}=\{\theta_{uv}\}_{N\times N}$ and jointly optimize it with downstream tasks. This additional parameterized matrix dramatically increases model complexity and also the vulnerability for serious over-fitting. Alternatively, another line of works, e.g., \cite{IDGL-neurips20,diffstruct}, predict a graph by estimating all-pair node similarity $\tilde a_{uv} = g_\theta(\mathbf x_u, \mathbf x_v)$ through a function of node representations, which encounters with a combinatorial searching space for distilling a sparse graph structure from a $N\times N$ similarity matrix.

In this work, we introduce a novel all-pair message passing scheme that can scale to large systems without compromising performance. We develop a kernelized Gumbel-Softmax operator that seamlessly synthesizes \emph{random feature map}~\cite{rff} and approximated sampling strategy~\cite{gumbel-iclr17}, for distilling latent structures among all the instance nodes and yielding moderate gradients through differentiable optimization. Though such a combination of two operations involving randomness could potentially result in mutual distortion, we theoretically prove that the new operator can still guarantee a well-posed approximation for concrete variables (discrete structures) with the error bounded by feature dimensions. Furthermore, such a design can reduce the algorithmic complexity of learning new topology per layer to $O(N)$ by avoiding explicit computation for the cumbersome all-pair similarity. 

% that have been successfully applied by linear Transformer architectures, e.g., \cite{linformer,rfa-iclr21}. On top of this, it becomes feasible to efficiently compute message passing over a latent fully-connected attentive graph. Yet, the critical difference is that in our case we target global \emph{dataset-scale} attentions connecting data instances, instead of learning \emph{instance-scale} attentions among elements per individual instances (i.e., sequences in original Transformer). The vast number of key-value entries would cause what we call \emph{over-normalizing} issue induced by the denominator of the Softmax. To handle this, we propose a kernelized Gumbel-Softmax operator that can implicitly distill discrete structures from the full-graph attention and yield moderate gradients through differentiable optimization. Such a design reduces the algorithmic complexity of structure learning to $O(N)$ and elegantly avoids the cumbersome all-pair similarity.

% To accommodate geometric information of observed graphs, we leverage two simple yet effective approaches. On the one hand, observed edges are used to construct relational biass serving as \emph{inductive bias} for full-graph attention computation. On the other hand, we treat input graphs as the prior of latent structures which induces a graph \emph{regularization} scheme. Based on this, the new learning objective essentially pursuits optimizing latent structures towards the posterior conditioned on input data and downstream labels.

The proposed module opens the door to a new class of graph networks, i.e., \model (\emph{Scalable Transformers for Node Classification}), that is capable of efficiently propagating messages between arbitrary node pairs in flexible layer-specific latent graphs. And to accommodate input graphs (if any), we devise two simple techniques: a relational bias and an edge-level regularization loss, as guidance for properly learning adaptive structures. 
%As theoretical insights, we prove that the kernelized Gumbel-Softmax still guarantees continuous relaxation for original categorical distributions and the approximation error bound is dependent on random feature dimension and temperature. Furthermore, we show that the latent structure learning objective essentially pursuits sampling from a posterior conditioned on input data and downstream labels. Also, our newly defined message-passing rule can be seen as iteratively optimizing a regularized energy function that enforces global and local consistency.
We evaluate our approach on diverse node classification tasks ranging from citation networks to images/texts. The results show its promising power for tackling heterophily, long-range dependencies, large-scale graphs, graph incompleteness and the absence of input graphs. %Also, our method performs more stably w.r.t. the variation of input structures than peer GNNs. 
The contributions of this paper are summarized as follows:

\textbf{$\bullet$\; } We develop a kernelized Gumbel-Softmax operator which is proven to serve as a well-posed approximation for concrete variables, particularly the discrete latent structure among data points. The new module can reduce the algorithmic complexity for learning new message-passing topology from quadratic to linear w.r.t. node numbers, without sacrificing the precision. This serves as a pioneering model that successfully scales graph structure learning to large graphs with million-level nodes. 

\textbf{$\bullet$\; } We further propose \model, a new class of graph networks with layer-wise message passing as operated over latent graphs potentially connecting all nodes. The latter are optimized in an end-to-end differentiable fashion through a new objective that essentially pursues sampling optimal topology from a posterior conditioned on node features and labels. To our knowledge, \model is the first Transformer model that scales all-pair message passing to large node classification graphs.

\textbf{$\bullet$\; } We demonstrate the model's efficacy by extensive experiments over a diverse set of datasets, including node classification benchmarks and image/text classification, where significant improvement over strong GNN models and SOTA structure learning methods is shown. Besides, it successfully scales to large graph datasets with up to 2M nodes where prior arts failed, and reduces the time/space consumption of the competitors by up to 93.1\%/80.6\% on moderate sized datasets.

\section{Related Works}\label{sec-related}
%We briefly review some existing models to better position our work and highlight the differences in Table~\ref{tbl-comparison}.

\textbf{Graph Neural Networks.} Building expressive GNNs is a fundamental problem in learning over graph data. With Graph Attention Networks (GAT)~\cite{GAT} as an early attempt, there are many follow-up works, e.g., \cite{learn2drop-wsdm20,GIB-neurips20}, considering weighting the edges in input graph for enhancing the expressiveness. Other studies, e.g., \cite{dropedge-iclr20, neuralsparse-icml20} focus on sparsifying input structures to promote robust representations. There are also quite a few approaches that propose scalable GNNs through, e.g., subgraph sampling~\cite{zeng2019graphsaint}, linear feature mapping~\cite{SGC-icml19}, and channel-wise transformation~\cite{zhang2022scalegcn}, etc. 
%Also, some recent works, e.g., \cite{h2gcn-neurips20,gprgnn-iclr21} propose adaptive feature aggregation schemes that could extract informative signals from neighbored nodes for addressing heterophily. 
However, these works cannot learn new edges out of the scope of input geometry, which may limit the model's receptive fields within local neighbors and neglect global information.

\textbf{Graph Structure Learning.} Going beyond observed topology, graph structure learning targets learning a new graph for message passing among all the instances~\cite{GSL-survey}. One line of work is similarity-driven where the confidence of edges are reflected by some similarity functions between node pairs, e.g., Gaussian kernels~\cite{kernalstruct-cikm18}, cosine similarity~\cite{IDGL-neurips20}, attention networks~\cite{glconv-cvpr19}, non-linear MLP~\cite{patient-2020} etc. Another line of work optimizes the adjacency matrix. Due to the increased optimization difficulties, some sophisticated training methods are introduced, such as bi-level optimization~\cite{IDLS-icml19}, variational approaches~\cite{varstruct-neurips20,wsgnn}, Bayesian inference~\cite{Bayesstruct-aaai19} and projected gradient descent~\cite{structnorm-kdd20}. To push further the limits of structure learning, this paper proposes a new model \model (for enabling scalable node-level Transformers) whose merits are highlighted via a high-level comparison in Table~\ref{tbl-comparison}. In particular, \model enables efficient structure learning in each layer, does not require input graphs and successfully scales to graphs with 2M nodes. %Another essential aspect for above models lies in the regularization (e.g., low-rank, sparsity, smoothness, etc.) that often turns out to play indispensable roles for avoiding serious over-fitting. 

\textbf{Node-Level v.s. Graph-Level Prediction.} We emphasize upfront that our focus is on \textit{node-level} prediction tasks involving a single large graph such that scalability is paramount, especially if we are to consider arbitrary relationships across \textit{all} nodes (each node is an instance with label and one can treat all the nodes non-i.i.d. generated due to the inter-dependence) for structure-learning purposes. Critically though, this scenario is quite distinct from \textit{graph-level} classification tasks whereby each i.i.d. instance is itself a small graph and fully connecting nodes \textit{within} each graph is computationally inexpensive. While this latter scenario has been explored in the context of graph structure learning~\cite{pointcloud-19} and all-pair message passing design, e.g., graph Transformers~\cite{graphtransformer-2020}, existing efforts do not scale to the large graphs endemic to node-level prediction. 
%A more broad range of graph structure learning involves two general categories encountered with distinct technical aspects. The works mentioned above are focused on \emph{node-level prediction} where each individual node is an instance and the input/new graph connects all the instances (e.g., 1K$\sim$1M) from a dataset. 
%Another category of works, e.g., \cite{graphtransformer-2020,graphformer-neurips21}, are tailored for \emph{graph-level prediction} where in contrast a dataset contains a massive set of relatively small graphs, e.g. molecules (with 10$\sim$100 nodes), and each graph has a label treated as the prediction target. Different scales of the two fundamentally different problems determines that they need to be handled separately. Particularly for node-level problems, the scalability issue becomes a severe bottleneck, let alone the learning difficulty concerning large latent graphs. 

% \david{Do you think it's a good idea to cut out all the related work like graphformer and transformers, etc., we had before?}

\begin{table*}[tb!]
	\centering
	\caption{Comparison of popular graph structure learning approaches for \emph{node-level tasks} where in particular, the graph connects all instance nodes and one's target is for prediction on each individual node. For \emph{parameterization}, `Function' means learning through functional mapping and `Adjacency' means directly optimizing graph adjacency. For \emph{expressivty}, `Fixed' means learning one graph shared by all propagation layers and `Layer-wise' means learning graph structures per layers. The \emph{largest demo} means the largest \# nodes of datasets used. $\dagger$ $m$ denotes \# anchors (i.e., a subset of nodes).
}
 \vspace{-5pt}
	\label{tbl-comparison}
	\scriptsize
	\resizebox{0.99\textwidth}{!}{
	\begin{threeparttable}
    {{
    		\begin{tabular}{l|cccccc}
			\toprule[0.8pt]
			\textbf{Models}  & \textbf{Parameterization} & \textbf{Expressivity} & \textbf{Input Graphs} & \textbf{Inductive}  & \textbf{Complexity} & \textbf{Largest Demo} \\
			\midrule[0.6pt]
% 			\multirow{3}{*}{\rotatebox{90}{Graph-level}}
% 			GraphBert~\cite{graphbert-2020} & graph-level &  & - & - &  & $O(N)$\\
% 			GraphTransformer~\cite{graphtransformer-2020} & Graph-level & Full & Function & Layer-wise & Position emb & - & Standard &$O(N^2)$\\
% 			Graphormer~\cite{graphformer-neurips21} & Graph-level & Full & Function & Layer-wise & Position emb & - & Standard &$O(N^2)$\\
% 			\midrule[0.5pt]
        %   \multirow{7}{*}{\rotatebox{90}{Node-level}} 
            LDS-GNN~\cite{IDLS-icml19}  & Adjacency & Fixed & Required & No  & $O(N^2)$ & 0.01M  \\
            ProGNN~\cite{structnorm-kdd20}  & Adjacency & Fixed & Required & No & $O(N^2)$ & 0.02M \\
            VGCN~\cite{varstruct-neurips20} & Adjacency & Fixed & Required & No  & $O(N^2)$ & 0.02M \\
            BGCN~\cite{Bayesstruct-aaai19}  & Adjacency & Fixed & Required & No  & $O(N^2)$ & 0.02M \\
            GLCN~\cite{glconv-cvpr19}  & Function & Fixed & Not necessary & Yes  & $O(N^2)$ & 0.02M \\
            IDGL~\cite{IDGL-neurips20}  & Function & Fixed & Required & Yes & $O(N^2)$ or $O(Nm)^\dagger$ & 0.1M \\
            \midrule
            \model (Ours)    &   Function & Layer-wise & Not necessary & Yes & $O(N)$ or $O(E)$ & 2M \\
			\bottomrule[0.8pt]
		\end{tabular}
		}
		}
		
	\end{threeparttable}
	}
 \vspace{-15pt}
\end{table*}

\section{\model: A Transformer Graph Network at Scale}

Let $\mathcal G = (\mathcal N, \mathcal E)$ denote a graph with $\mathcal N$ a node set ($|\mathcal N| = N$) and $\mathcal E \subseteq \mathcal N \times \mathcal N$ an edge set ($|\mathcal E| = E$). Each node $u\in \mathcal N$ is assigned with node features $\mathbf x_u \in \mathbb R^D$ and a label $y_u$. We define an adjacency matrix $\mathbf A = \{a_{uv}\}\in \{0, 1\}^{N\times N}$ where $a_{uv}=1$ if edge $(u,v)\in \mathcal E$ and $a_{uv}=0$ otherwise. Without loss of generality, $\mathcal E$ could be an empty set in case of no input structure. There are two common settings: transductive learning, where testing nodes are within the graph used for training, and inductive learning which handles new unseen nodes out of the training graph. The target is to learn a function for node-level prediction, i.e., estimate labels for unlabeled or new nodes in the graph.

%The supervised/semi-supervised learning problem can thereafter be defined. With $\mathcal N_{tr}$ (reps. $\mathcal N_{te}$) as a set of nodes whose labels are observed (resp. unobserved), the task targets learning from $(\mathbf X, \mathbf A, \mathbf Y_{tr})$, where $\mathbf X = \{\mathbf x_u\}_{u\in \mathcal N}$ and $\mathbf Y_{tr} = \{y_u\}_{u\in \mathcal N_{tr}}$, in order for prediction on $\mathcal N_{te}$. Also, the problem could be defined in an inductive learning setting: training with $(\mathbf X_{tr}, \mathbf A_{tr}, \mathbf Y_{tr})$, where $\mathbf X_{tr} = \{x_u\}_{u\in \mathcal N_{tr}}$ and $\mathbf A_{tr} = \{a_{uv}\}_{u, v\in \mathcal N_{tr}}$, and inference with $(\mathbf X, \mathbf A)$.

% In the following subsections, we first describe our newly developed message-passing scheme with an efficient kernelized Gumbel-Softmax operator (Section~\ref{sec-model-gnn}) and then theoretically justify its design (Section~\ref{sec-model-theory}). We further present an improved model with geometric prior as position biass (Section~\ref{sec-model-pos}). Then we derive the learning objective through latent structure inference (Section~\ref{sec-model-training}).

\textbf{General Model and Key Challenges.} We start with the observation that the input structures may not be the ideal one for propagating signals among nodes and instead there exist certain latent structures that could facilitate learning better node representations. We thus consider the updating rule
\begin{equation}\label{eqn-update}
\tilde{\mathbf A}^{(l)} = g(\mathbf A, \mathbf Z^{(l)}; \omega), ~~~~~\mathbf Z^{(l+1)} = h(\tilde{\mathbf A}^{(l)}, \mathbf A, \mathbf Z^{(l)}; \theta),
\end{equation}
% \begin{equation}\label{eqn-update}
% \begin{split}
%     &\tilde{\mathbf A}^{(l)} = g(\mathbf A, \mathbf Z^{(l)}; \omega), \\
%     &\mathbf Z^{(l+1)} = h(\tilde{\mathbf A}^{(l)}, \mathbf A, \mathbf Z^{(l)}; \theta),
% \end{split}
% \end{equation}
where $\mathbf Z^{(l)} = \{\mathbf z^{(l)}_u\}_{u\in \mathcal N}$ and $\tilde{\mathbf A}^{(l)} = \{\tilde a^{(l)}_{uv}\}_{u,v\in \mathcal N}$ denotes the node representations and the estimated latent graph of the $l$-th layer, respectively, and $g$, $h$ are both differentiable functions aiming at 1) structure estimation for a layer-specific latent graph $\tilde{\mathbf A}^{(l)}$ based on node representations and 2) feature propagation for updating node representations, respectively. %We call our model \model, for \textit{A LImber Graph Network at Scale},
%\footnote{\qitian{Though with similar terminology, \model is different from Highway Networks~\cite{highwaynetworks} that introduce residual links between different layers for training deep networks.}} 
The model defined by Eqn.~\ref{eqn-update} follows the spirit of Transformers~\cite{vaswani2017attention} (where in particular $\tilde{\mathbf A}^{(l)}$ can be seen as an attentive graph) that potentially enables message passing between any node pair in each layer, which, however, poses two \emph{challenges}: 

\textbf{$\bullet$ (Scalability)}: How to reduce the prohibitive quadratic complexity for learning new graphs?

\textbf{$\bullet$ (Differentiability)}: How to enable end-to-end differentiable optimization for discrete structures?

Notice that the first challenge is non-trivial in node-level prediction tasks (the focus of our paper), since the latent graphs could potentially connect \emph{all the instance nodes} (e.g., from thousands to millions, depending on dataset sizes), which is fairly hard to guarantee both precision and scalability. %This design provides flexibility and capacity for leveraging global information from other instances in a large system to benefit prediction on individual nodes.

% However, the computation of $\tilde{\mathbf A}^{(l)}$ is hard for scaling due to the requirement of $N\times N$ all-pair similarity. Also, proper training methods are needed to learn desirable $\tilde{\mathbf A}^{(l)}$ that can truly benefit node representations for downstream prediction. In the following subsections, we first describe our newly developed message-passing scheme with an efficient kernelized Gumbel-Softmax operator (Section~\ref{sec-model-gnn}) and then derive the learning objective through latent structure inference (Section~\ref{sec-model-training}). Later, we discuss the relationships with existing works to better position our paper and shed lights on its potential impacts (Section~\ref{sec-model-related}).

\subsection{Efficient Learning Discrete Structures}\label{sec-model-gnn}

We describe our new message-passing scheme with an efficient kernelized Gumbel-Softmax operator to resolve the aforementioned challenges.
We assume $\mathbf z_u^{(0)} = \mathbf x_u$ as the initial node representation. %and denote node representations at $l$-th layer as $\{\mathbf z_u^{(l)}\}\in \mathbb R^{N\times d}$.

\textbf{Kernelized Message Passing.} We define a full-graph attentive network that estimates latent interactions among instance nodes and enables corresponding densely-connected message passing:
\begin{equation}\label{eqn-attn-ori}
    \tilde a_{uv}^{(l)} = \frac{\exp((W_Q^{(l)} \mathbf z_u^{(l)})^\top (W_K^{(l)} \mathbf z_v^{(l)}))}{\sum_{w=1}^N \exp((W_Q^{(l)} \mathbf z_u^{(l)})^\top (W_K^{(l)} \mathbf z_w^{(l)}))}, \quad \mathbf z_u^{(l+1)} = \sum_{v=1}^N \tilde a_{uv}^{(l)} \cdot (W_V^{(l)} \mathbf z_v^{(l)}), 
\end{equation}
where $W_Q^{(l)}$, $W_K^{(l)}$ and $W_V^{(l)}$ are learnable parameters in $l$-th layer. We omit non-linearity activation (after aggregation) for brevity. The updating for $N$ nodes in one layer using Eqn.~\ref{eqn-attn-ori} requires prohibitive $\mathcal O(N^2)$ complexity. %, which hinders the scalability for large $N$. 
Also, given large $N$, the normalization in the denominator would shrink attention weights to zero and lead to gradient vanishing. We call this problem as \emph{over-normalizing}.

%recent progress in linear Transformer models, 
To accelerate the full-graph model, we observe that the \emph{dot-then-exponentiate} operation in Eqn.~\ref{eqn-attn-ori} can be converted into a pairwise similarity function:
\begin{equation}\label{eqn-attn-kernel}
    \mathbf z_u^{(l+1)} = \sum_{v=1}^N \frac{\kappa(W_Q^{(l)} \mathbf z_u^{(l)}, W_K^{(l)} \mathbf z_v^{(l)})}{\sum_{w=1}^N \kappa(W_Q^{(l)} \mathbf z_u^{(l)}, W_K^{(l)} \mathbf z_w^{(l)})} \cdot (W_V^{(l)} \mathbf z_v^{(l)}),
\end{equation}
where $\kappa(\cdot, \cdot): \mathbb R^d \times \mathbb R^d \rightarrow \mathbb R$ is a positive-deﬁnite kernel measuring the pairwise similarity. The kernel function can be further approximated by random features (RF)~\cite{rff}%inspired by recent progress in efficient attention mechanisms, e.g., \cite{linformer,rfa-iclr21}. With a similar spirit, we can use kernel tricks to convert  into
which serves as an unbiased estimation via $\kappa(\mathbf a, \mathbf b)= \langle\Phi(\mathbf a),\Phi(\mathbf b)\rangle_{\mathcal V} \approx \phi(\mathbf a)^\top\phi(\mathbf b)$, where the first equation is by Mercer’s theorem with $\Phi: \mathbb R^d \rightarrow \mathcal V$ a basis function and $\mathcal V$ a high-dimensional vector space, and $\phi(\cdot): \mathbb R^d \rightarrow \mathbb R^m$ is a low-dimensional feature map with random transformation. There are many potential choices for $\phi$, e.g., Positive Random Features (PRF)~\cite{performer-iclr21}
\begin{equation}\label{eqn-pos-rff}
    \phi(\mathbf x) = \frac{\exp{(\frac{-\|\mathbf x\|_2^2}{2}})}{\sqrt{m}} [\exp(\mathbf w_1^\top \mathbf x), \cdots, \exp(\mathbf w_m^\top \mathbf x)],
\end{equation}
where $\mathbf w_k\sim \mathcal N(0, I_d)$ is i.i.d. sampled random transformation. %Eqn.~\ref{eqn-pos-rff} is called Positive Random Features (PRF) used by \cite{performer-iclr21} achieving similar expressivity as the Softmax operator in original Transformer. 
The RF converts dot-then-exponentiate operation into inner-product in vector space, which enables us to re-write Eqn.~\ref{eqn-attn-kernel} (assuming $\mathbf q_u = W_Q^{(l)} \mathbf z_u^{(l)}$, $\mathbf k_u = W_K^{(l)} \mathbf z_u^{(l)}$ and $\mathbf v_u = W_V^{(l)} \mathbf z_u^{(l)}$ for simplicity):
\begin{equation}\label{eqn-attn-rff}
        \mathbf z_u^{(l+1)}  = \sum_{v=1}^N \frac{\phi(\mathbf q_u)^\top \phi(\mathbf k_v)}{\sum_{w=1}^N \phi(\mathbf q_u)^\top \phi(\mathbf k_w)} \cdot \mathbf v_v = \frac{\phi(\mathbf q_u)^\top  \sum_{v=1}^N \phi(\mathbf k_v) \cdot \mathbf v_v^\top  }{ \phi(\mathbf q_u)^\top \sum_{w=1}^N \phi(\mathbf k_w)}.
\end{equation}
The key advantage of Eqn.~\ref{eqn-attn-rff} is that the two summations are shared by each $u$, so that one only needs to compute them once and re-used for others. Such a property enables $\mathcal O(N)$ computational complexity for full-graph message passing, which paves the way for learning graph structures among large-scale instances. Moreover, one can notice that Eqn.~\ref{eqn-attn-rff} avoids computing the $N\times N$ similarity matrix, i.e., $\{\tilde a_{uv}^{(l)}\}_{N\times N}$, required by Eqn.~\ref{eqn-attn-ori}, thus also reducing the learning difficulties.

Nevertheless, Eqn.~\ref{eqn-attn-rff} still suffers what we mentioned the over-normalizing issue. The crux is that the message passing is operated on a weighted fully-connected graph where, in fact, only partial edges are important. Also, such a deterministic way of feature aggregation over all the instances may increase the risk for over-fitting, especially when $N$ is large. %(more discussions will be provided in Section~\ref{sec-model-training}). 
We next resolve the issues by distilling a sparse structure from the fully-connected graph. 

\textbf{Differentiable Stochastic Structure Learning.} The difficultly lies in how to enable differentiable optimization for discrete graph structures. The weight $\tilde a_{uv}^{(l)}$ given by Eqn.~\ref{eqn-attn-ori} could be used to define a categorical distribution for generating latent edges from distribution $\mbox{Cat}(\bm\pi_u^{(l)})$ where $\bm\pi_u^{(l)} = \{\pi_{uv}^{(l)}\}_{v=1}^N$ and $\pi_{uv}^{(l)} = p(v|u) = \tilde a_{uv}^{(l)}$. Then in principle, we can sample over the categorical distribution multiple times for each node to obtain its neighbors. However, the sampling process would introduce discontinuity and hinders back-propagation. Fortunately, we notice that the Eqn.~\ref{eqn-attn-kernel} can be modified to incorporate the reparametrization trick~\cite{gumbel-iclr17} to allow differentiable learning:
\begin{equation}\label{eqn-attn-gumbel-ori}
    \mathbf z_u^{(l+1)}  = \sum_{v=1}^N \frac{\exp((\mathbf q_u^\top \mathbf k_v + g_v) /\tau)}{\sum_{w=1}^N \exp((\mathbf q_u^\top \mathbf k_w + g_w)/\tau)} \cdot \mathbf v_v = \sum_{v=1}^N \frac{\kappa(\mathbf q_u / \sqrt{\tau}, \mathbf k_v / \sqrt{\tau}) e^{g_v/\tau}}{\sum_{w=1}^N \kappa(\mathbf q_u / \sqrt{\tau}, \mathbf k_w / \sqrt{\tau}) e^{g_w/\tau}} \cdot \mathbf v_v,
\end{equation}
where $g_u$ is i.i.d. sampled from Gumbel distribution and $\tau$ is a temperature coefficient. Eqn.~\ref{eqn-attn-gumbel-ori} is a continuous relaxation of sampling one neighbored node for $u$ over $\mbox{Cat}(\bm \pi_u^{(l)})$ and $\tau$ controls the closeness to hard discrete samples~\cite{Concrete-iclr17}. Following similar reasoning as Eqn.~\ref{eqn-attn-kernel} and \ref{eqn-attn-rff}, we can yield
\begin{equation}\label{eqn-attn-gumbel-final}
        \mathbf z_u^{(l+1)} \approx \sum_{v=1}^N \frac{\phi(\mathbf q_u / \sqrt{\tau})^\top \phi(\mathbf k_v / \sqrt{\tau}) e^{g_v/\tau}}{\sum_{w=1}^N \phi(\mathbf q_u / \sqrt{\tau})^\top \phi(\mathbf k_w / \sqrt{\tau}) e^{g_w/\tau}} \cdot \mathbf v_v  = \frac{\phi(\mathbf q_u / \sqrt{\tau})^\top  \sum_{v=1}^N e^{g_v/\tau}\phi(\mathbf k_v / \sqrt{\tau}) \cdot \mathbf v_v^\top  }{ \phi(\mathbf q_u / \sqrt{\tau})^\top \sum_{w=1}^N e^{g_w/\tau}\phi(\mathbf k_w / \sqrt{\tau})}. 
\end{equation}
Eqn.~\ref{eqn-attn-gumbel-final} achieves message passing over a sampled latent graph (where we only sample once for each node) and still guarantees linear complexity as Eqn.~\ref{eqn-attn-rff}. In practice, we can sample $K$ times (e.g., $K= 5$) for each node and take an average of the aggregated results. Due to space limit, we defer more details concerning the differentiable sampling-based message passing to Appendix~\ref{appx-model}. Besides, in Fig.~\ref{fig-model} and Alg.~1 of Appendix~\ref{appx-model}, we present an illustration for node embedding updating in each layer, from a matrix view that is practically used for implementation.

%We will provide more theoretical discussions on Eqn.~\ref{eqn-attn-gumbel-final} later in Section~\ref{sec-model-theory}.

% Also, motivated by~\cite{jknet-icml18} on GNN designs, we add residual link and combine each layer's node representations for final prediction, to overcome over-smoothing:
% \begin{equation}\label{eqn-residual}
%     \overline{\mathbf z}^{(l+1)}_u = (1-\alpha) \overline{\mathbf z}^{(l)}_u + \alpha \mathbf z^{(l+1)}_u,
% \end{equation}
% \begin{equation}\label{eqn-predict}
%     \hat y_u = \mbox{MLP}([\mathbf z_u^{(0)} \| \overline{\mathbf z}_u^{(1)} \| \cdots \| \overline{\mathbf z}_u^{(L)}]).
% \end{equation}

\subsection{Well-posedness of the Kernelized Gumbel-Softmax Operator}\label{sec-model-theory}

One reasonable concern for Eqn.~\ref{eqn-attn-gumbel-final} is whether the RF approximation for kernel functions maintains the well-posedness of Gumbel approximation for the target discrete variables. As a justification for the new message-passing function, we next answer two theoretical questions: 1) How is the approximation capability of RF for the original dot-then-exponentiate operation with Gumbel variables in Eqn.~\ref{eqn-attn-gumbel-ori}? 2) Does Eqn.~\ref{eqn-attn-gumbel-final} still guarantee a continuous relaxation of the categorical distributions? We formulate the results as follows and defer proofs to Appendix~\ref{appx-proof}.
\begin{theorem}[Approximation Error for Softmax-Kernel]\label{thm-error}
    %Using the Positive Random Features proposed in~\cite{performer-iclr21}, 
    Assume $\|\mathbf q_u\|_2$ and $\|\mathbf k_v\|_2$ are bounded by $r$, then with probability at least $1-\epsilon$, the gap $\Delta = \left |\phi(\mathbf q_u / \sqrt{\tau})^\top \phi(\mathbf k_v / \sqrt{\tau}) - \kappa(\mathbf q_u / \sqrt{\tau}, \mathbf k_v / \sqrt{\tau}) \right |)$, where $\phi$ is defined by Eqn.~\ref{eqn-pos-rff}, will be bounded by
        $\mathcal O \left (\sqrt{\frac{\exp(6r/\tau)}{m\epsilon}} \right )$.
\end{theorem}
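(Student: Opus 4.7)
The plan is to exploit the fact that the positive random feature map $\phi$ defined in Eqn.~\ref{eqn-pos-rff} is an unbiased estimator of the softmax kernel, i.e.\ $\mathbb{E}[\phi(\mathbf a)^\top \phi(\mathbf b)] = \kappa(\mathbf a,\mathbf b) = \exp(\mathbf a^\top \mathbf b)$, and then to control the fluctuation of this average of $m$ i.i.d.\ random summands via a second-moment concentration argument (Chebyshev's inequality). Since the statement carries a $1/\sqrt{\epsilon}$ dependence on the failure probability rather than $\sqrt{\log(1/\epsilon)}$, Chebyshev is exactly the right tool; no sub-Gaussian or Bernstein machinery should be required.

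Concretely, first I would write $\phi(\mathbf q_u/\sqrt{\tau})^\top \phi(\mathbf k_v/\sqrt{\tau}) = \frac{1}{m}\sum_{k=1}^{m} Z_k$ with
\[
Z_k \;=\; \exp\!\Big(-\tfrac{1}{2\tau}\big(\|\mathbf q_u\|_2^2+\|\mathbf k_v\|_2^2\big)\Big)\,\exp\!\Big(\tfrac{1}{\sqrt{\tau}}\mathbf w_k^\top(\mathbf q_u+\mathbf k_v)\Big),
\]
where $\mathbf w_k\sim \mathcal N(0,I_d)$. Invoking the Gaussian moment generating function $\mathbb E[\exp(\mathbf w^\top \mathbf c)] = \exp(\|\mathbf c\|_2^2/2)$ immediately yields $\mathbb E[Z_k] = \exp(\mathbf q_u^\top \mathbf k_v/\tau) = \kappa(\mathbf q_u/\sqrt{\tau},\mathbf k_v/\sqrt{\tau})$, confirming unbiasedness. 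Applying the same identity to the vector $2\mathbf w_k$ gives the second moment $\mathbb E[Z_k^2] = \exp\!\big(\tfrac{1}{\tau}(\|\mathbf q_u\|_2^2+\|\mathbf k_v\|_2^2+4\mathbf q_u^\top \mathbf k_v)\big)$, so the norm bound $\|\mathbf q_u\|_2,\|\mathbf k_v\|_2\le r$ together with Cauchy--Schwarz yields $\mathrm{Var}(Z_k)\le \mathbb E[Z_k^2] = \mathcal O(\exp(6r/\tau))$ (after matching the paper's convention for how the radius enters the exponent).

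Finally, Chebyshev's inequality gives $\Pr\!\big(|\tfrac{1}{m}\sum_k Z_k - \mathbb E[Z_k]|\ge t\big) \le \mathrm{Var}(Z_k)/(m t^2)$. Setting the right-hand side equal to $\epsilon$ and solving for $t$ produces the advertised bound $\Delta = \mathcal O\!\big(\sqrt{\exp(6r/\tau)/(m\epsilon)}\big)$ holding with probability at least $1-\epsilon$.

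The main obstacle I anticipate is purely technical: the bookkeeping in the variance computation, namely tracking the $\exp(-\|\cdot\|^2/2)$ prefactor of the PRF map, the rescaling by $1/\sqrt{\tau}$, and the cross term $4\mathbf q_u^\top \mathbf k_v$, so that the constant ``$6$'' in the final exponent is matched exactly rather than being some other small integer. Once the second moment is bounded cleanly, the remaining Chebyshev step is mechanical.
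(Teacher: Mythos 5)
Your proposal is correct and follows essentially the same route as the paper: unbiasedness of the positive random feature estimator, a second-moment/variance bound of order $\exp(6r/\tau)$ (which the paper imports from the Performer lemma and you re-derive directly via the Gaussian moment generating function), and then Chebyshev's inequality with the threshold solved so that the failure probability is $\epsilon$. The only cosmetic difference is that you bound $\mathrm{Var}(Z_k)\le\mathbb E[Z_k^2]$ instead of using the exact variance formula, and you correctly flag the paper's convention of writing the exponent as $6r/\tau$ where the norm bounds strictly give $6r^2/\tau$.
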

% \begin{theorem}\label{thm-error}
%     Assume the L2-norms of $\mathbf q_u$ and $\mathbf k_u$ are bounded by $r$, then with probability at least $1-\epsilon$, the gap $D = \left |\phi(\mathbf q_u / \sqrt{\tau})^\top \phi(\mathbf k_u / \sqrt{\tau}) - \kappa(\mathbf q_u / \sqrt{\tau}, \mathbf k_u / \sqrt{\tau}) \right |$ will be bounded by
%     \begin{equation}
%         \mathcal O \left ((\frac{m^2r^2}{\epsilon})^2 \exp(- \frac{m\tau\epsilon^2}{4(d+2)}) \right).\nonumber
%     \end{equation}
% \end{theorem}
We can see that the error bound of RF for approximating original softmax-kernel function depends on both the dimension of feature map $\phi$ and temperature $\tau$. Notably, the error bound is independent of node number $N$, which implies that the approximation ability is insensitive to dataset sizes.

The second question is non-trivial since Eqn.~\ref{eqn-attn-gumbel-final} involves randomness of Gumbel variables and random transformation in $\phi$, which \emph{cannot} be decoupled apart. We define 
$c_{uv} = \frac{\phi(\mathbf q_u / \sqrt{\tau})^\top \phi(\mathbf k_v / \sqrt{\tau}) e^{g_v/\tau}}{\sum_{w=1}^N \phi(\mathbf q_u / \sqrt{\tau})^\top \phi(\mathbf k_w / \sqrt{\tau}) e^{g_w/\tau}}$
as the result from the kernelized Gumbel-Softmax and $\mathbf c_u = \{c_{uv}\}_{v=1}^N$ denotes the sampled edge vector for node $u$. We can arrive at the result as follows.
\begin{theorem}[Property of Kernelized Gumbel-Softmax Random Variables]\label{thm-gumbel}
     Suppose $m$ is sufficiently large, we have the convergence property for the kernelized Gumbel-Softmax operator
    \begin{equation}
        \lim_{\tau\rightarrow 0}\mathbb P(c_{uv} > c_{uv'}, \forall v'\neq v) = \frac{\exp(\mathbf q_u^\top \mathbf k_v)}{\sum_{w=1}^N \exp(\mathbf q_u^\top \mathbf k_w )},\quad \lim_{\tau\rightarrow 0} \mathbb P(c_{uv} = 1) = \frac{\exp(\mathbf q_u^\top \mathbf k_v )}{\sum_{w=1}^N \exp(\mathbf q_u^\top \mathbf k_w )}.\nonumber
    \end{equation}
\end{theorem}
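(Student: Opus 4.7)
The plan is to reduce the theorem to the classical Gumbel-Max and Gumbel-Softmax identities by first ``undoing'' the random-feature approximation inside $c_{uv}$, and then invoking the standard temperature-to-zero argument. Conceptually, the random features in $\phi$ and the Gumbel draws $g_v$ play orthogonal roles in the limit: $\phi$ is a proxy for the exponential kernel $\exp(\mathbf q_u^\top \mathbf k_v/\tau)$, whereas $g_v$ supplies the categorical sampling randomness. Once the first is replaced by its target, what remains is exactly the Gumbel-Softmax, so the second theorem reduces to a well-known fact plus a bookkeeping argument about the order of limits.

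Step one is to invoke Theorem~\ref{thm-error}: for each fixed $\tau>0$, as $m\to\infty$ one has $\phi(\mathbf q_u/\sqrt\tau)^\top \phi(\mathbf k_v/\sqrt\tau)\to \exp(\mathbf q_u^\top\mathbf k_v/\tau)$ in probability, uniformly over the $N$ pairs $(u,v)$. Taking $m$ sufficiently large therefore lets me replace $\phi(\mathbf q_u/\sqrt\tau)^\top \phi(\mathbf k_v/\sqrt\tau)\cdot e^{g_v/\tau}$ by $\exp((\mathbf q_u^\top\mathbf k_v+g_v)/\tau)$ in both the numerator and denominator of $c_{uv}$, up to vanishing error. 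With probability tending to one I then obtain
$$c_{uv}=\frac{\exp((\mathbf q_u^\top\mathbf k_v+g_v)/\tau)}{\sum_{w=1}^N\exp((\mathbf q_u^\top\mathbf k_w+g_w)/\tau)},$$
which is a standard Gumbel-Softmax sample at temperature $\tau$ with logits $\mathbf q_u^\top\mathbf k_v$.

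Step two handles the first claim: because the softmax at any temperature $\tau>0$ is strictly monotone in its logits, the event $\{c_{uv}>c_{uv'} \text{ for all } v'\neq v\}$ coincides with $\{v=\arg\max_{v'}(\mathbf q_u^\top\mathbf k_{v'}+g_{v'})\}$. The Gumbel-Max identity then immediately gives the first limit, and in fact it does not even require $\tau\to 0$. Step three is the second claim: as $\tau\to 0$ the softmax concentrates on its argmax index, so $c_{uv}\to \mathbf{1}\{v=\arg\max_{v'}(\mathbf q_u^\top\mathbf k_{v'}+g_{v'})\}$ almost surely (ties have Gumbel measure zero). Bounded convergence transfers this to probabilities and, via Gumbel-Max once more, yields the second identity.

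The main obstacle is quantitative control of step one as $\tau$ shrinks: Theorem~\ref{thm-error} delivers an error of order $\sqrt{\exp(6r/\tau)/m}$, which grows exponentially as $\tau\to 0$. To make the informal ``$m$ sufficiently large'' clause in the theorem precise, I would take $m=m(\tau)$ growing faster than $\exp(6r/\tau)$ (so that the RF noise becomes negligible compared with the typical Gumbel-induced logit gaps of order $1/\tau$), and only then let $\tau\to 0$. With this order of limits pinned down, the rest is routine Gumbel-Max and dominated/bounded convergence bookkeeping, and no new probabilistic tool is required beyond what Theorem~\ref{thm-error} already provides.
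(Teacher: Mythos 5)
Your proposal is correct and takes essentially the same route as the paper's own proof: both arguments combine Chebyshev-type concentration of the positive random features around the softmax kernel (the content of Theorem~\ref{thm-error}) with the classical Gumbel-Max/Concrete identities, the paper merely formalizing your ``replace $\phi(\mathbf q_u/\sqrt{\tau})^\top\phi(\mathbf k_v/\sqrt{\tau})$ by $\exp(\mathbf q_u^\top\mathbf k_v/\tau)$ up to vanishing error'' step through explicit two-sided bounds on $\mathbb P\bigl(\log\beta_v + g_v/\tau > \log\beta_{v'} + g_{v'}/\tau\bigr)$ before letting $P_\epsilon\to 1$ and $\tau\to 0$. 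Your explicit coupling $m=m(\tau)$ growing fast enough to beat the $\exp(6r/\tau)$ blow-up is, if anything, a more careful treatment of the order of limits than the paper's informal ``$m$ sufficiently large,'' so no gap remains beyond the quantitative bookkeeping you already flag.
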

It shows that when i) the dimension of feature map is large enough and ii) the temperature goes to zero, the distribution from which latent structures are sampled would converge to the original categorical distribution. 

\emph{Remark.} The two theorems imply a trade-off between RF approximation and Gumbel-Softmax approximation w.r.t. the choice of $\tau$. A large $\tau$ would help to reduce the burden on
kernel dimension $m$, and namely, small $\tau$ would require a very large $m$ to guarantee enough RF approximation precision. On the other hand, if $\tau$ is too large, the weight on each edge will converge to $\frac{1}{N}$, i.e., the model nearly degrades to mean pooling, while a small $\tau$ would endow the kernelized Gumbel-Softmax with better approximation to the categorical distribution. Empirical studies on this are presented in Appendix~\ref{appx-result}.
%Furthermore, when $\tau$ is small enough, the latent graph is a sparse one where each node has $K$ 1-hop neighbors, yet the premise for such a case requires infinite dimension of $\phi$, which is impossible. 
%When $\tau$ goes large, while the approximation error bound of RF would be tighten, the weight on each edge will get close to $\frac{1}{N}$, i.e., the model nearly degrades to mean pooling over all node representations. Empirically, we also found $\tau$ is an important hyper-parameter (see Section~\ref{sec-exp} for more discussions).
% \begin{algorithm}[t]
%     \caption{Feed-forward Computation of \model.}
%     \label{alg:method}
%     \KwIn{node features $\mathbf Z^{(0)} = \mathbf X$, observed adjacency $\mathbf A$.}
%         \For {$l=1,2,\dots,L$}{
%             \For {$k=1,2,\dots,K$}{
%                 Calculate $\mathbf Z^{(l)}_k$ by Eqn.~\ref{eqn-attn-gumbel-final}; %\algcomment{feature propagation}
%             }
%             $\mathbf Z^{(l)} \leftarrow \mbox{MEAN}(\{\mathbf Z^{(l)}_k\})$; %\algcomment{averaging sampled results} \\
%             Update $\mathbf Z^{(l)}$ by Eqn.~\ref{eqn-pos}; %\algcomment{relational bias} \\
%             $\mathbf Z^{(l)} \leftarrow (1-\alpha) \mathbf Z^{(l-1)} + \alpha \mathbf Z^{(l)}$; %\algcomment{residual link}
%         }
%         Predict output $\hat{\mathbf Y} = \mbox{MLP}(\{\mathbf Z^{(l)}\}_{l=0}^L)$; 
% \end{algorithm}

\subsection{Input Structures as Relational Bias}\label{sec-model-pos}

Eqn.~\ref{eqn-attn-gumbel-final} does not leverage any information from observed geometry which, however, is often recognized important for modeling physically-structured data~\cite{geometriclearning-2016}. We therefore accommodate input topology (if any) as relational bias via modifying the attention weight as
$\tilde a_{uv}^{(l)} \leftarrow \tilde a_{uv}^{(l)} + \mathbb I[a_{uv} = 1] \sigma(b^{(l)})$,
where $b^{(l)}$ is a learnable scalar as relational bias for any adjacent node pairs $(u, v)$ and $\sigma$ is a certain (bounded) activation function like sigmoid. The relational bias aims at assigning adjacent nodes in $\mathcal G$ with proper weights, %as an inductive bias guiding latent structure learning.
and the node representations could be accordingly updated by
\begin{equation}\label{eqn-pos}
    \mathbf z_u^{(l+1)} \leftarrow \mathbf z_u^{(l+1)} + \sum_{v, a_{uv}=1} \sigma(b^{(l)}) \cdot \mathbf v_v.
\end{equation}
Eqn.~\ref{eqn-pos} increases the algorithmic complexity for message passing to $\mathcal O(N + E)$, albeit within the same order-of-magnitude as common GNNs operating on input graphs. Also, one can consider higher-order adjacency as relational bias for better expressiveness at some expense of efficiency, as similarly done by \cite{mixhop-icml19}. We summarize the feed-forward computation of \model in Alg.~1. 

\begin{figure*}[t!]
\centering
\centering
\includegraphics[width=\textwidth]{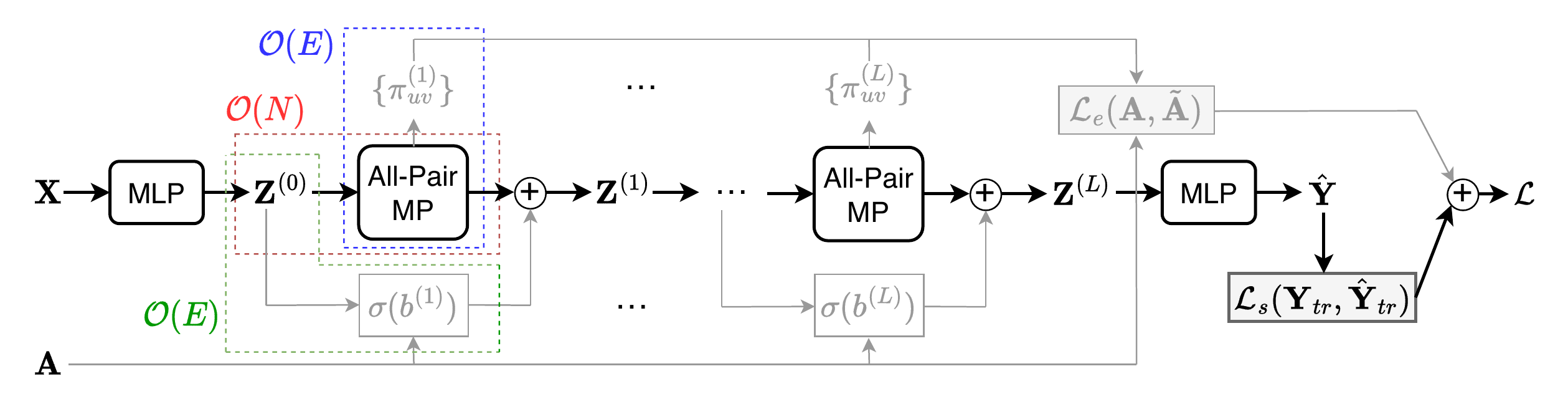}
\caption{Illustration for the data flow of \model which takes node embedding matrix $\mathbf X$ and (optional) graph adjacency matrix $\mathbf A$ as input. There are three components in \model. The first one is the all-pair message passing (MP) module (colored red) which adopts our proposed kernelized Gumbel-Softmax operator to update node embeddings in each layer with $\mathcal O(N)$ complexity. The other two components are optional based on the availability of input graphs: 1) relational bias (colored green) that reinforces the propagation weight on observed edges; 2) edge regularization loss (colored blue) that aims to maximize the probability for observed edges. These two components require $\mathcal O(E)$ complexity. The final training loss $\mathcal L$ is the weighted sum of the standard supervised classification loss and the edge regularization loss.}
\label{fig-dataflow}
\end{figure*}

\subsection{Learning Objective}\label{sec-training}

Given training labels $\mathbf Y_{tr} = \{y_u\}_{u\in \mathcal N_{tr}}$, where $\mathcal N_{tr}$ denotes the set of labeled nodes, the common practice is to maximize the observed data log-likelihood %for predictive distribution $\log p(\mathbf Y|\mathbf X, \mathbf A)$.
which yields a supervised loss (with $C$ classes)
\begin{equation}\label{eqn-loss-stand}
    \mathcal L_s(\mathbf Y_{tr}, \hat {\mathbf Y}_{tr}) = - \frac{1}{N_{tr}}\sum_{v\in \mathcal N_{tr}} \sum_{c=1}^C \mathbb I[y_u=c] \log \hat y_{u,c},
\end{equation}
where $\mathbb I[\cdot]$ is an indicator function. However, it may not suffice to generalize well due to that the graph topology learning increases the degrees of freedom and the number of training labels is not comparable to that. Therefore, we additionally introduce an edge-level regularization:
\begin{equation}\label{eqn-loss-mle}
    \mathcal L_{e}(\mathbf A, \tilde {\mathbf A}) = - \frac{1}{NL} \sum_{l=1}^L \sum_{(u, v)\in \mathcal E} \frac{1}{d_u} \log \pi_{uv}^{(l)},
\end{equation}
where $d_u$ denotes the in-degree of node $u$ and $\pi_{uv}^{(l)}$ is the predicted probability for edge $(u,v)$ at the $l$-th layer. Eqn.~\ref{eqn-loss-mle} is a maximum likelihood estimation for edges in $\mathcal E$, with data distribution defined
\begin{equation}\label{eqn-edge-prior}
    p_0(v|u) = \left\{ 
    \begin{array}{cc}
         & \frac{1}{d_u}, \quad a_{uv} = 1  \\
         & 0, \quad otherwise.
    \end{array}
    \right. 
\end{equation}
We next show how to efficiently obtain $\pi_{uv}^{(l)}$. Although the feed-forward \model computation defined by Eqn.~\ref{eqn-attn-gumbel-final} does not explicitly produce the value for each $\pi_{uv}^{(l)}$, we can query their values by
\begin{equation}
    \pi_{uv}^{(l)} = \frac{\phi(W_Q^{(l)} \mathbf z_u^{(l)})^\top  \phi(W_K^{(l)} \mathbf z_v^{(l)}) }{ \phi(W_Q^{(l)} \mathbf z_u^{(l)})^\top \sum_{w=1}^N \phi(W_K^{(l)} \mathbf z_w^{(l)})},
\end{equation}
where the summation term can be re-used from once computation, as is done by Eqn.~\ref{eqn-attn-rff} and Eqn.~\ref{eqn-attn-gumbel-final}. Therefore, after once computation for the summation that requires $\mathcal O(N)$, the computation for each $\pi_{uv}^{(l)}$ requires $\mathcal O(1)$ complexity, yielding the total complexity controlled within $\mathcal O(E)$ (since we only need to query the observed edges). The final objective can be the combination of two:
$\mathcal L = \mathcal L_s + \lambda \mathcal L_e$, where $\lambda$ controls how much emphasis is put on input topology. We depict the whole data flow of \model's training in Fig.~\ref{fig-dataflow}.

\begin{figure*}[!tb]
    \centering
    \includegraphics[width=0.95\textwidth]{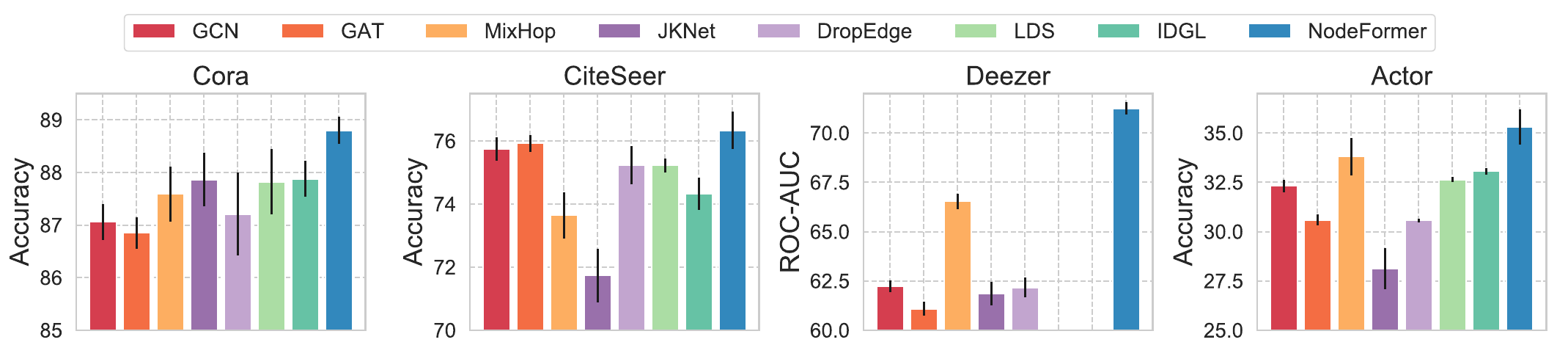}
    \caption{Experimental results for node classification in transductive setting on four common datasets. The missing results on \texttt{Deezer} is caused by out-of-memory (OOM).}
    \label{fig:transductive}
        \vspace{-5pt}
\end{figure*}

\begin{figure}[t]
	\begin{minipage}[t]{0.49\linewidth}
		\captionof{table}{Testing ROC-AUC and training memory cost on \texttt{OGB-Proteins} with batch size 10K.}
		\label{tab:proteins}
		\resizebox{0.95\linewidth}{!}{
		\begin{tabular}[t]{l|c|c}
            \toprule
			\textbf{Method} & \textbf{ROC-AUC (\%)} & \textbf{Train Mem} \\
            \midrule
			MLP & 72.04 \std{$\pm$ 0.48} & 2.0 GB \\
			GCN & 72.51 \std{$\pm$ 0.35} & 2.5 GB \\
                SGC & 70.31 \std{$\pm$ 0.23} & 1.2 GB \\
                GraphSAINT-GCN & 73.51 \std{$\pm$ 1.31} & 2.3 GB\\
                GraphSAINT-GAT & 74.63 \std{$\pm$ 1.24} & 5.2 GB\\
			\midrule
			\model & \cellcolor{lightgray}\textbf{77.45} \std{$\pm$ 1.15} & 3.2 GB \\
			\model-dt & 75.50 \std{$\pm$ 0.64} & 3.1 GB \\
			\model-tp & 76.18 \std{$\pm$ 0.09} & 3.2 GB \\
            \bottomrule
		\end{tabular}
	}
	\end{minipage}	
	\hspace{4pt}
	\begin{minipage}[t]{0.49\linewidth}
		\captionof{table}{Testing Accuracy and training memory cost on \texttt{Amazon2M} with batch size 100K.}
		\label{tab:amazon}
		\resizebox{0.95\linewidth}{!}{
		\begin{tabular}[t]{l|c|c}
            \toprule
			\textbf{Method} & \textbf{Accuracy (\%)} & \textbf{Train Mem} \\
            \midrule
			$\mbox{MLP}$ &  63.46 \std{$\pm$ 0.10}  & 1.4 GB  \\
			$\mbox{GCN}$ & 83.90 \std{$\pm$ 0.10} & 5.7 GB \\
                $\mbox{SGC}$ & 81.21 \std{$\pm$ 0.12} & 1.7 GB \\
                GraphSAINT-GCN & 83.84 \std{$\pm$ 0.42} &  2.1 GB\\
                GraphSAINT-GAT & 85.17 \std{$\pm$ 0.32} &  2.2 GB\\
			\midrule
		    $\mbox{\model}$ & \cellcolor{lightgray}\textbf{87.85} \std{$\pm$ 0.24} & 4.0 GB \\
			$\mbox{\model-dt}$ & 87.02 \std{$\pm$ 0.75} & 2.9 GB \\
			$\mbox{\model-tp}$ & 87.55 \std{$\pm$ 0.11} & 4.0 GB \\
            \bottomrule
		\end{tabular}
	}
	\end{minipage}	
\end{figure}

% 77.19 ± 0.41

% 87.07 ± 0.26

\section{Evaluation}\label{sec-exp}
We consider a diverse set of datasets for experiments and present detailed dataset information in Appendix~\ref{appx-dataset}. For implementation, %we set layer number $L=2$ or $3$ and use layer normalization~\cite{layernorm-2016}. 
we set $\sigma$ as sigmoid function and $\tau$ as 0.25 for all datasets. The output prediction layer is a one-layer MLP. More implementation details are presented in Appendix~\ref{appx-implementation}.
%We refer to the implementation by \cite{performer-iclr21} for softmax-kernel random features. 
All experiments are conducted on a NVIDIA V100 with
16 GB memory.

As baseline models, we basically consider GCN~\cite{GCN-vallina} and GAT~\cite{GAT}. Besides, we compare with some advanced GNN models, including JKNet~\cite{jknet-icml18} and MixHop~\cite{mixhop-icml19}. 
%JKNet combines MixHop introduces high-order aggregation in each layer and GPRGNN adopts generalized PageRank for adaptive feature combination.
These GNN models all rely on input graphs. 
We further consider DropEdge~\cite{dropedge-iclr20} and 
two SOTA graph structure learning methods, LDS-GNN~\cite{IDLS-icml19} and IDGL~\cite{IDGL-neurips20} for comparison. For large-scale datasets, we additionally compare with two scalable GNNs, a linear model SGC~\cite{SGC-icml19} and a graph-sampling model GraphSAINT~\cite{zeng2019graphsaint}. More detailed information about these models are presented in Appendix~\ref{appx-implementation}. 
%Most of the experiments are run on a Nvidia 2080 Ti with 11 GB GPU memory. In cases where the Nvidia 2080Ti does not provide enough memory, we deploy experiments on a Nvidia Titan RTX with 24GB GPU memory. 
All the experiments are repeated five times with different initializations.%The former directly optimizes adjacency matrix, while the latter targets cosine similarity between node pairs and adopt iterative learning. 
%We will make our implementation public upon publication.

%In the following subsections, we present comparison results and some ablation studies. Appendix~\ref{appx-result} includes extra ablation studies, hyper-parameter analysis and visualization.

\subsection{Experiments on Transductive Node Classification}\label{sec-exp-trans}
We study supervised node classification in transductive setting on common graph datasets: \texttt{Cora}, \texttt{Citeseer}, \texttt{Deezer} and \texttt{Actor}. 
%\textsc{Cora} and \textsc{Citeseer} are two commonly used citation networks with high homophily ratios, where nodes represent published papers. \textsc{Deezer} is a social networks among European users (as nodes) and \textsc{Actor} records the co-occurrence of actors (as nodes) in Wikipedia pages. 
The first two have high homophily ratios and the last two are identified as heterophilic graphs~\cite{h2gcn-neurips20,newbench}. These datasets are of small or medium sizes (with 2K$\sim$20K nodes). We use random splits with train/valid/test ratios as 50\%/25\%/25\%. For evaluation metrics, we use ROC-AUC for binary classification on \texttt{Deezer} and Accuracy for other datasets with more than 2 classes. Results are plotted in Fig.~\ref{fig:transductive} and \model achieves the best mean Accuracy/ROC-AUC across four datasets and in particular, outperforms other models by a large margin on two heterophilic graphs. The results indicate that \model can handle both homophilious and non-homophilious graphs. Compared with two structure learning models LDS and IDGL, \model yields significantly better performance, which shows its superiority. Also, for \texttt{Deezer}, LDS and IDGL suffers from out-of-memory (OOM). In fact, the major difficulty for \texttt{Deezer} is the large dimensions of input node features (nearly 30K), which causes OOM for IDGL even with the anchor approximation. In contrast, \model manages to scale and produce desirable accuracy.

\subsection{Experiments on Larger Graph Datasets}\label{sec-exp-large}

To further test the scalability, we consider two large-sized networks, \texttt{OGB-Proteins} and \texttt{Amazon2M}, with over 0.1 million and 2 million of nodes, respectively. %\texttt{OGB-Proteins} is a protein-protein interaction graph and the goal is to predict the presence of protein function (112 kinds of binary labels to predict).
\texttt{OGB-Proteins} is a multi-task dataset with 112 output dimensions, while \texttt{Amazon2M} is extracted from the Amazon Co-Purchasing network that entails long-range dependence~\cite{longrange-nips20}.
For \texttt{OGB-Proteins}, we use the protocol of \cite{ogb-nips20} and ROC-AUC for evaluation.
% the performance is measured by the average of ROC-AUC scores across the 112 tasks.
%\texttt{Amazon2M} is an Amazon product co-purchasing network and the goal is to classify products as categories. The original setting of \cite{clustergcn-kdd19} considers randomly splitting the products into 90\%/10\% train/test data without using validation set. 
For \texttt{Amazon2M}, %unlike \cite{clustergcn-kdd19} using random splits, we consider a more challenging setting: we rank the nodes according to their degrees, and use the top 10\% for training and the rest for testing. This introduces distribution shifts and is more alike to real situations where frequent items are often used for training a model that needs to handle infrequent testing products. For validation set, we hold out 20\% data from the training set. 
we adopt random splitting with 50\%/25\%/25\% nodes for training, validation and testing, respectively. Due to the large dataset size, we adopt mini-batch partition for training, in which case, for \model we only consider structure learning among nodes in a random mini-batch. We use batch size 10000 and 100000 for \texttt{Proteins} and \texttt{Amazon2M}, respectively. While the mini-batch partition may sacrifice the exposure to all instances, we found using large batch size can yield decent performance, which is also allowable thanks to the $\mathcal O(N)$ complexity of our model. For example, even setting the batch size as 100000, we found \model costs only 4GB GPU memory for training on \texttt{Amazon2M}. Table~\ref{tab:proteins} presents the results on \texttt{OGB-Proteins} where for fair comparison mini-batch training is also used for other models except GraphSAINT. We found that \model yields much better ROC-AUC and only requires comparable memory as simple GNN models. Table~\ref{tab:amazon} reports the results on \texttt{Amazon2M} which shows that \model outperforms baselines by a large margin and the memory cost is even fewer than GCN. This shows its practical efficacy and scalability on large-scale datasets and also the capability for addressing long-range dependence with shallow layers (we use $L=3$).

\begin{table*}[t!]
	\centering
	\caption{Experimental results on semi-supervised classficiation on \texttt{Mini-ImageNet} and \texttt{20News-Groups} where we use $k$-NN (with different $k$'s) for artificially constructing an input graph.}
	\label{tab:cv-nlp}
	\resizebox{\linewidth}{!}{
			\begin{tabular}{c|cccc|cccc}
				\toprule
				\multirow{2}{*}{\textbf{Method}} &  \multicolumn{4}{c|}{\texttt{Mini-ImageNet}}	 & \multicolumn{4}{c}{\texttt{20News-Group}} \\
				% \cline{2-5} \cline{6-9}
				& $ k=5 $ & $ k=10 $ & $ k=15 $ & $ k=20 $  & $ k=5 $ & $ k=10 $ & $ k=15 $ & $ k=20 $ \\
				\midrule
				GCN & 84.86 \std{$\pm$ 0.42} & 85.61 \std{$\pm$ 0.40} & 85.93 \std{$\pm$ 0.59} & 85.96 \std{$\pm$ 0.66}& 65.98 \std{$\pm$ 0.68} & 64.13 \std{$\pm$ 0.88} & 62.95 \std{$\pm$ 0.70} & 62.59 \std{$\pm$ 0.62} \\
				GAT & 84.70 \std{$\pm$ 0.48} & 85.24 \std{$\pm$ 0.42}  & 85.41 \std{$\pm$ 0.43}  & 85.37 \std{$\pm$ 0.51} &  64.06 \std{$\pm$ 0.44}  & 62.51 \std{$\pm$ 0.71}  &  61.38 \std{$\pm$ 0.88} & 60.80 \std{$\pm$ 0.59} \\
				DropEdge & 83.91 \std{$\pm$ 0.24} & 85.35 \std{$\pm$ 0.44} & 85.25 \std{$\pm$ 0.63} & 85.81 \std{$\pm$ 0.65}& 64.46 \std{$\pm$ 0.43} & 64.01 \std{$\pm$ 0.42} & 62.46 \std{$\pm$ 0.51} & 62.68 \std{$\pm$ 0.71} \\
				IDGL & 83.63 \std{$\pm$ 0.32}  & 84.41 \std{$\pm$ 0.35}  &   85.50 \std{$\pm$ 0.24}  &  85.66 \std{$\pm$ 0.42}  &  65.09 \std{$\pm$ 1.23}  & 63.41 \std{$\pm$ 1.26}  & 61.57 \std{$\pm$ 0.52}  & 62.21 \std{$\pm$ 0.79}  \\
				LDS &OOM &OOM &OOM &OOM &  \cellcolor{lightgray}\textbf{66.15} \std{$\pm$ 0.36}  &  64.70 \std{$\pm$ 1.07}  & 63.51 \std{$\pm$ 0.64}  & 63.51 \std{$\pm$ 1.75}  \\
				\midrule
				\model &   \cellcolor{lightgray}\textbf{86.77} \std{$\pm$ 0.45}  &   \cellcolor{lightgray}\textbf{86.74} \std{$\pm$ 0.23}  &   \cellcolor{lightgray}\textbf{86.87} \std{$\pm$ 0.41}  &   \cellcolor{lightgray}\textbf{86.64} \std{$\pm$ 0.42} &  66.01 \std{$\pm$ 1.18}  &  \cellcolor{lightgray}\textbf{65.21} \std{$\pm$ 1.14}  &    \cellcolor{lightgray}\textbf{64.69} \std{$\pm$ 1.31}  &   \cellcolor{lightgray}\textbf{64.55} \std{$\pm$ 0.97}  \\
				%\model w/o reg & 86.44 \std{$\pm$ 0.23}  &   86.54 \std{$\pm$ 0.40}  &   86.67 \std{$\pm$ 0.45}  &   86.60 \std{$\pm$ 0.30}  &    65.96 \std{$\pm$ 1.36}  &     64.70 \std{$\pm$ 1.07}  &    64.64 \std{$\pm$ 1.42}  &    64.48 \std{$\pm$ 1.00}  \\
				%\model w/o rb &  86.36 \std{$\pm$ 0.36}  &  86.44 \std{$\pm$ 0.21}  &   86.24 \std{$\pm$ 0.44}  &   85.76 \std{$\pm$ 0.40}  &  64.82 \std{$\pm$ 1.20} &   64.18 \std{$\pm$ 1.45} &   64.43 \std{$\pm$ 1.38} &  64.14 \std{$\pm$ 1.23} \\
    \midrule
                \model w/o graph & \multicolumn{4}{c|}{\textbf{87.46}  \std{$\pm$ 0.36}} & \multicolumn{4}{c}{\textbf{64.71}  \std{$\pm$ 1.33}} \\
				\bottomrule
			\end{tabular}		}
\end{table*}

\begin{figure*}[!tb]
    \centering
    \includegraphics[width=0.95\textwidth]{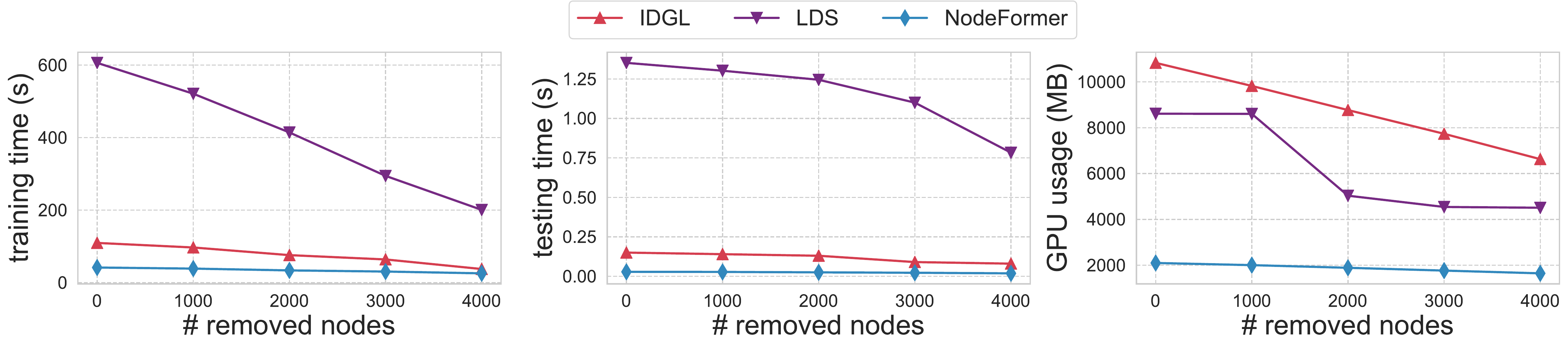}
    \caption{Comparison of training/inference time and GPU memory cost w.r.t. different instance numbers (by removing a certain portion of nodes) on \texttt{20News-Groups}.}
    \label{fig:timespace}
    \vspace{-5pt}
\end{figure*}

\subsection{Experiments on Graph-Enhanced Applications}\label{sec-exp-app}
\vspace{-5pt}
We apply our model to semi-supervised image and text classification on \texttt{Mini-ImageNet} and \texttt{20News-Groups} datasets, without input graphs. The instances of \texttt{Mini-ImageNet}~\cite{miniimagenet-neurips2016} are
84×84 RGB images and we randomly choose 30 classes each of which contains 600 samples for experiments.
\texttt{20News-Groups}~\cite{pedregosa2011scikit} consists of nearly 10K texts whose features are extracted by TF-IDF. More details for preprocessing are presented in Appendix~\ref{appx-dataset}. Also, for each dataset, we randomly split instances into 50\%/25\%/25\% for train/valid/test. Since there is no input graph, we use $k$-NN (over input node features) for artificially constructing a graph for enabling GNN's message passing and the graph-based components (edge regularization and relational bias) of \model. Table~\ref{tab:cv-nlp} presents the comparison results under different $k$'s. We can see that \model achieves the best performance in seven cases out of eight. The performance of GNN competitors varies significantly with different $k$ values, and \model is much less sensitive. Intriguingly, when we do not use the input graph, i.e., removing both the edge regularization and relational bias, \model can still yield competitive even superior results on \texttt{Mini-ImageNet}. This suggests that the $k$-NN graphs are not necessarily informative and besides, our model learns useful latent graph structures from data.

\subsection{Further Discussions}\label{sec-exp-further}

\textbf{Comparison of Time/Space Consumption.} Fig.~\ref{fig:timespace} plots training/inference time and GPU memory costs of \model and two SOTA structure learning models. Compared with LDS, \model reduces the training time, inference time, memory cost by up to 93.1\%, 97.9\%, 75.6\%, respectively; compared with IDGL (using anchor-based approximation for speedup), \model reduces the training time, inference time, memory cost by up to 61.8\%, 80.8\%, 80.6\%, respectively. %\model consumes x2.7/x5.3 less time for training/inference and x5.2 less space than the best among IDGL (using anchor-based approximation for acceleration) and LDS. %Also, the time/space cost of \model increases linearly w.r.t. instance numbers. 
%When not using input graphs (removing the relational bias), the training time can be further reduced by nearly 41\%.

%Also, notably, we test \model without using input structures, i.e., removing both edge-level regularization and relation bias. Intriguingly, the model still yields decent performance on \texttt{20News-Group} and even exceeds original \model model by a large margin on \texttt{Mini-ImageNet}. Such a result suggests that input structures are not a necessary requirement for \model, which can be another prominent advantage of \model for practical scenarios where graphs are unavailable.

\textbf{Ablation on Stochastic Components.} Table~\ref{tab:proteins} and \ref{tab:amazon} also include two variants of \model for ablation study. 1) \model-dt: replace Gumbel-Softmax by original Softmax (with temperature 1.0) for deterministic propagation; 2) \model-tp: use original Softmax with temperature set as 0.25 (the same as \model). There is performance drop when removing the Gumbel components, which may be due to over-normalizing or over-fitting that are amplified in large datasets, as we discussed in Section~\ref{sec-model-gnn} and the kernelized Gumbel-Softmax operator shows its effectiveness.

\textbf{Ablation on Edge Loss and Relational Bias.} %We study the effects of edge-level regularization and relation bias as ablation study shown in Table~\ref{tab:cv-nlp}. More results are deferred to Appendix~\ref{appx-result} and 
We study the effects of edge-level regularization and relation bias as ablation study shown in Table~\ref{tab:ablation} located in Appendix~\ref{appx-result}, where the results consistently show that both components contribute to some positive effects and suggest that our edge-level loss and relation bias can both help to leverage useful information from input graphs.

\textbf{Impact of Temperature and Feature Map Dimension.} We study the effects of $\tau$ and $m$ in Fig.~\ref{fig:mt_ablation} located in Appendix~\ref{appx-result} and the variation trend accords with our theoretical analysis in Section~\ref{sec-model-theory}.
Specifically, the result shows that the test accuracy increases and then falls with the temperature changing from low to high values (usually achieves the peak accuracy with a temperature of 0.4). Besides, we can see that when the temperature is relatively small, the test accuracy goes high with the dimension of random features increasing. However, when the temperature is large, the accuracy would drop even with large feature dimension $m$.
Such a phenomenon accords with the theoretical result presented in Section~\ref{sec-model-theory}. For low temperature which enables desirable approximation performance for Gumbel-Softmax, then larger random feature dimension would help to produce better approximation to the original exponentiate-then-dot operator. In contrast, high temperature could not guarantee precise approximation for the original categorical distribution, which deteriorates the performance.

\textbf{Visualization and Implications.} %We visualize the produced structures by \model on different datasets in Fig.~\ref{fig:vis-emb}. There are some common and special patterns, like dense connection between inter-class nodes and sparse connection between inter-class nodes. This suggests that the latent graphs can propagate useful information for better distinguishing nodes with distinct classes, which helps to interpret why \model can improves prediction. 
Fig.~\ref{fig:vis-emb} visualizes node embeddings and edge connections (filter out the edges with weights larger than a threshold) on \texttt{20News-Groups} and \texttt{Mini-Imagenet}, which show that \model tends to assign more weights for nodes with the same class and sparse edges for nodes with different classes. This helps to interpret why \model improves the performance on downstream node-level prediction: the latent structures can propagate useful information to help the model learn better node representations that can be easily distinguished by the classifier.
We also compare the learned structures with original graphs in Fig.~\ref{fig:vis-att} located in Appendix~\ref{appx-result}. We can see that the latent structures learned by \model show different patterns from the observed ones, especially for heterophilic graphs. Another interesting phenomenon is that there exist some dominant nodes which are assigned large weights by other nodes, forming some vertical `lines' in the heatmap. This suggests that these nodes could contain critical information for the learning tasks and play as pivots that could improve the connectivity of the whole system.  

% and the results turn out to be quite different across homophilous and heterophilic graph datasets. Also we found \model tends to assign large edge weights for a small portion of dominant nodes. This can be an interesting discovery that implies that some nodes may be important anchors or serve as pivots for enhancing the whole connectivity of the dataset.

\begin{figure}[tb!]
    \centering
    \subfigure[\texttt{20News-Groups}]{
    \label{fig:20news}
    \includegraphics[width=5.5cm]{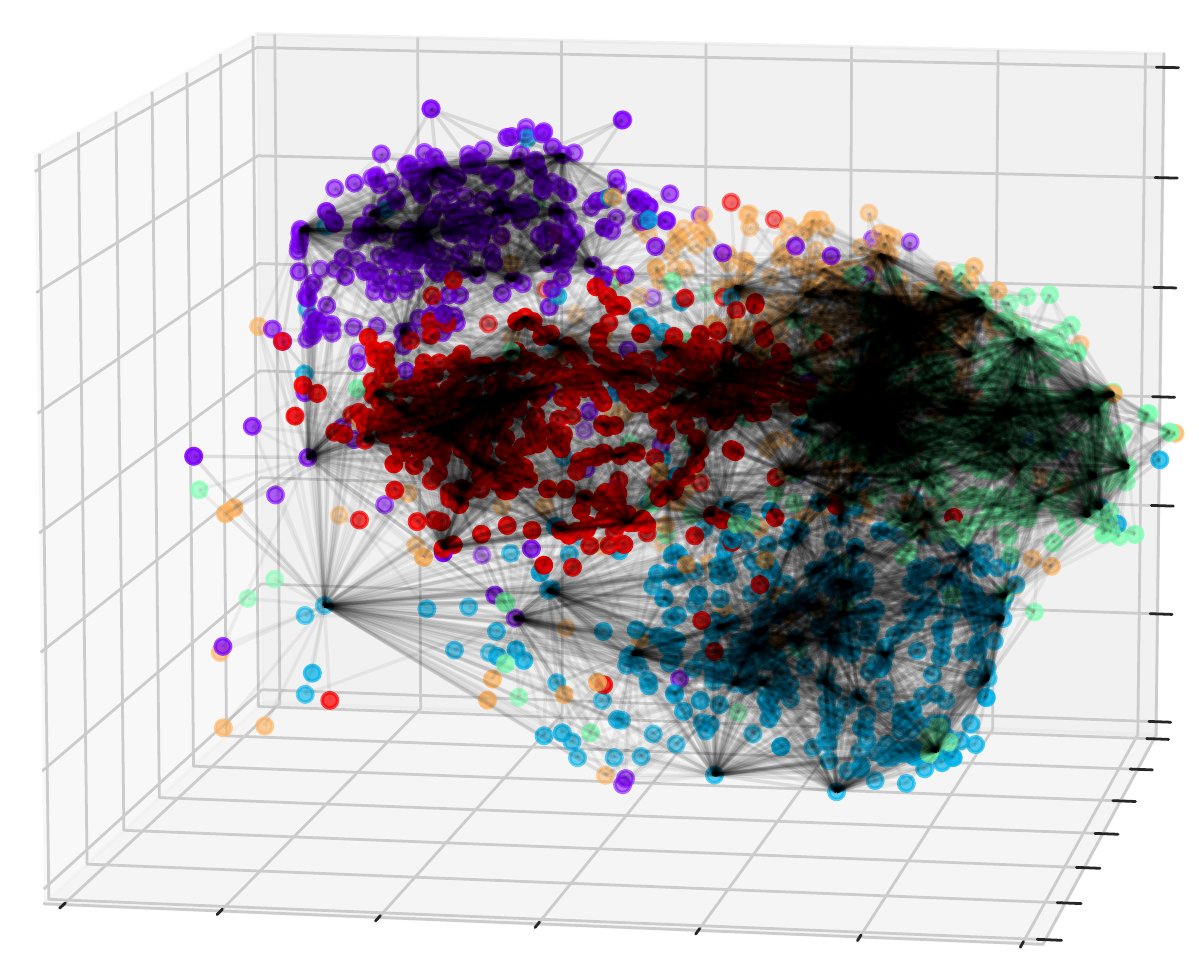}}
    \hspace{0.2in}
    \subfigure[\texttt{Mini-ImageNet}]{
    \label{fig:mini}
    \includegraphics[width=5.5cm]{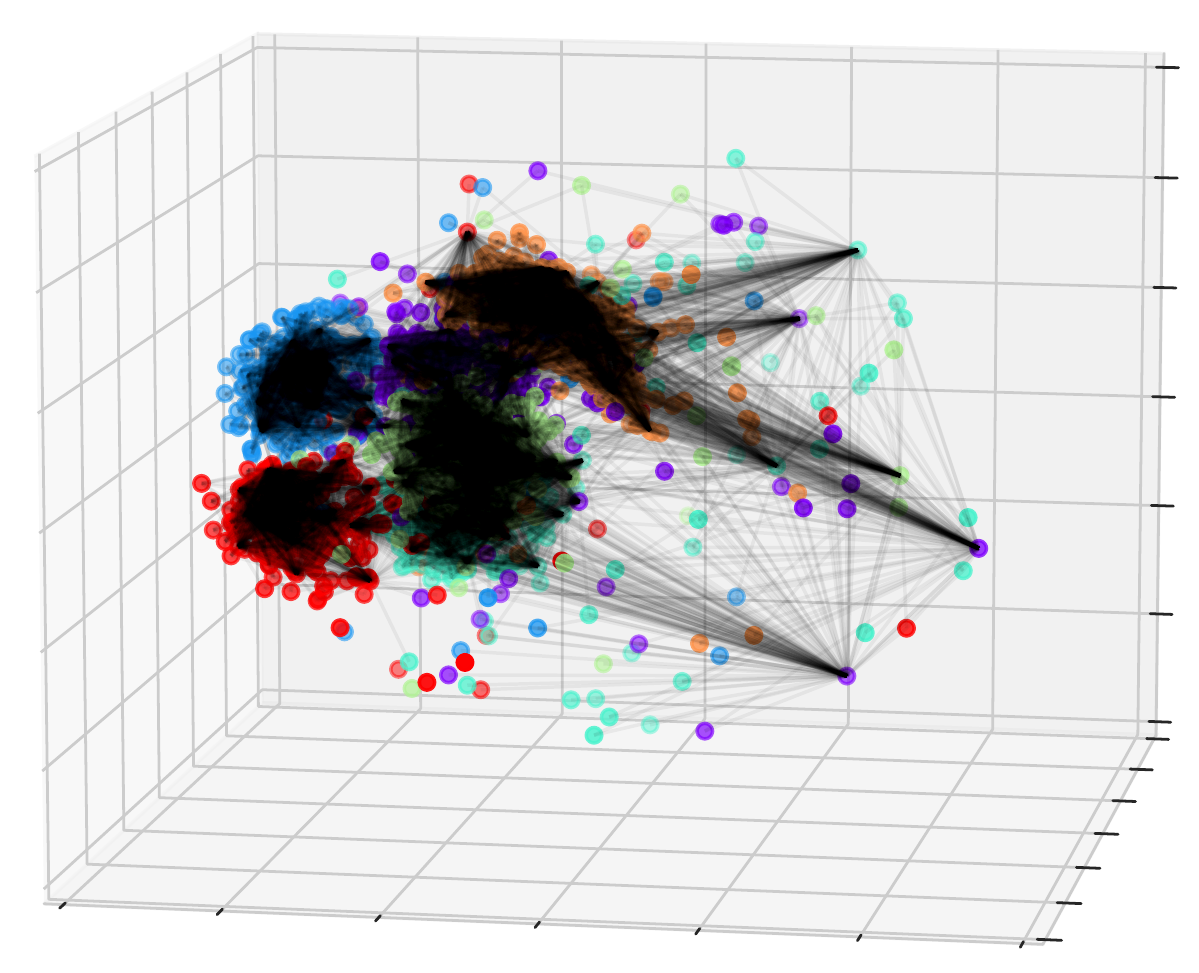}}
    \caption{Visualization of node embeddings and edge connections produced by \model on graph-enhanced application datasets. We mark the nodes with a particular class with one color. More comparison between the learned structures and original input graphs is presented in Appendix~\ref{appx-result}.}
    \label{fig:vis-emb}
    \vspace{-5pt}
\end{figure}

\section{Why \model Improves Downstream Prediction?}\label{sec-analysis}

There remains a natural question concerning our learning process: how effective can the learned latent topology be for downstream tasks? We next dissect the rationale from a Bayesian perspective. In fact, our model induces a predictive distribution $p(\mathbf Y, \tilde{\mathbf A}|\mathbf X, \mathbf A) = p(\tilde{\mathbf A}| \mathbf X, \mathbf A) p(\mathbf Y| \tilde{\mathbf A}, \mathbf X, \mathbf A)$ where we can treat the estimated graph $\tilde{\mathbf A}$ as a latent variable.\footnote{We assume one latent graph to simplify the illustration though we practically learn layer-specific graphs for each layer of \model. The analysis can be trivially extended to such a case.} Specifically, $p(\tilde{\mathbf A}| \mathbf X, \mathbf A)$ is instantiated with the structure estimation module and $p(\mathbf Y| \tilde{\mathbf A}, \mathbf X, \mathbf A)$ is instantiated with the feature propagation module. In principle, ideal latent graphs should account for downstream tasks and maximize the potentials of message passing for producing informative node representations. Thus, optimal 
latent graphs presumably come from the posterior $p(\tilde{\mathbf A}|\mathbf Y, \mathbf X, \mathbf A) = \frac{p(\mathbf Y|\mathbf X, \mathbf A, \tilde{\mathbf A}) p(\tilde{\mathbf A} | \mathbf X, \mathbf A)}{\int_{\mathbf Y} p(\mathbf Y|\mathbf X, \mathbf A, \tilde{\mathbf A}) p(\tilde{\mathbf A} | \mathbf X, \mathbf A) d\mathbf Y}$ which is given by Bayes theorem. Unfortunately, such a posterior is unknown and intractable for the integration. 

\textbf{A Variational Perspective.} An intriguing conclusion stems from another view into the learning process: we can treat the structure estimation as a variational distribution $q(\tilde{\mathbf A}|\mathbf X, \mathbf A)$ and our learning objective in Section~\ref{sec-training} can be viewed as the embodiment of a minimization problem over the predictive and variational distributions via
%One can alternatively turn to a variational distribution $q_\phi(\tilde{\mathbf A}|\mathbf X, \mathbf A)$ conditioned on observed data as surrogate. Intriguingly, our learning objective is an embodiment for the evidence lower bound (ELBO):
%\begin{small}
%\vspace{-10pt}
\begin{equation}\label{eqn-objective-elbo}
    p^*, q^* = \arg\min_{p, q}\underbrace{- \mathbb E_{q} [\log p(\mathbf Y| \tilde{\mathbf A}, \mathbf X, \mathbf A)]}_{\mathcal L_s} + \underbrace{\mathcal D(q(\tilde{\mathbf A}|\mathbf X, \mathbf A) \| p_0(\tilde{\mathbf A}|\mathbf X, \mathbf A)) }_{\mathcal L_e},
   % \vspace*{-0.2cm}
\end{equation}
%\end{small}
where $\mathcal D$ denotes the Kullback-Leibler divergence. %Specifically, $q_\phi$ is instantiated as the structure estimation component in \model and $p_\theta$ is instantiated as the feature aggregation and prediction components. 
Specifically, the \emph{predictive} term is equivalent to minimizing the supervised loss (with Gumbel-Softmax as a surrogate for sampling-based estimates over $q(\tilde{\mathbf A}|\mathbf X, \mathbf A)$), and the KL \emph{regularization} term is embodied with the edge-level MLE loss (Eqn.~\ref{eqn-loss-mle}) (if we define the prior distribution $p_0(\tilde{\mathbf A}|\mathbf X, \mathbf A)$ following Eqn.~\ref{eqn-edge-prior}). One may notice that Eqn.~\ref{eqn-objective-elbo} is essentially the Evidence Lower Bound (ELBO) for the log-likelihood $\log p(\mathbf Y|\mathbf X, \mathbf A)$. %We can further bridge the ELBO, likelihood of observed data and the divergence between $q_{\phi}$ and posterior.
\begin{proposition}\label{prop-var}
    Assume $q$ can exploit arbitrary distributions over $\tilde {\mathbf A}$. When Eqn.~\ref{eqn-objective-elbo} achieves the optimum, we have 1) $\mathcal D(q(\tilde{\mathbf A}|\mathbf X, \mathbf A) \| p(\tilde{\mathbf A}|\mathbf Y, \mathbf X, \mathbf A)) = 0$ and 2) $\log p(\mathbf Y| \mathbf X, \mathbf A)$ is maximized.
\end{proposition}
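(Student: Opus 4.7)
The plan is to recognize Eqn.~\ref{eqn-objective-elbo} as the negative ELBO for the marginal log-likelihood $\log p(\mathbf Y|\mathbf X, \mathbf A)$, with $\tilde{\mathbf A}$ as the latent variable, $p_0(\tilde{\mathbf A}|\mathbf X,\mathbf A)$ as the prior, and $q(\tilde{\mathbf A}|\mathbf X,\mathbf A)$ as the variational posterior. Both conclusions will then follow from the standard KL-gap identity together with the hypothesis that $q$ ranges over arbitrary distributions.

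First I would apply Bayes' rule to the true posterior to obtain $p(\mathbf Y|\tilde{\mathbf A},\mathbf X,\mathbf A)\,p_0(\tilde{\mathbf A}|\mathbf X,\mathbf A) = p(\tilde{\mathbf A}|\mathbf Y,\mathbf X,\mathbf A)\,p(\mathbf Y|\mathbf X,\mathbf A)$. Taking logarithms, then expectations under $q$, and inserting $\pm\,\mathbb E_q[\log q(\tilde{\mathbf A}|\mathbf X,\mathbf A)]$ rearranges into the familiar identity
\[
\mathbb E_q\!\left[\log p(\mathbf Y|\tilde{\mathbf A},\mathbf X,\mathbf A)\right] - \mathcal D\!\left(q \,\|\, p_0\right) \;=\; \log p(\mathbf Y|\mathbf X,\mathbf A) \;-\; \mathcal D\!\left(q(\tilde{\mathbf A}|\mathbf X,\mathbf A) \,\|\, p(\tilde{\mathbf A}|\mathbf Y,\mathbf X,\mathbf A)\right).
\]
The left-hand side is exactly $-(\mathcal L_s + \mathcal L_e)$ from Eqn.~\ref{eqn-objective-elbo}, so the joint objective can equivalently be written as $\mathcal L_s + \mathcal L_e = -\log p(\mathbf Y|\mathbf X,\mathbf A) + \mathcal D(q \,\|\, p(\tilde{\mathbf A}|\mathbf Y,\mathbf X,\mathbf A))$.

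From this rewriting, both claims follow by a short EM-style argument. For any fixed $p$, the KL term is the only part depending on $q$; since $q$ is unrestricted, it is driven to zero precisely at $q = p(\tilde{\mathbf A}|\mathbf Y,\mathbf X,\mathbf A)$, establishing claim (1). Once that KL gap vanishes, what remains is $-\log p(\mathbf Y|\mathbf X, \mathbf A)$, whose minimization over $p$ is the maximization of the marginal likelihood, giving claim (2). I would also note that these two optima are realized simultaneously by the joint argmin because, with $q$ free, the KL gap and the marginal likelihood live on complementary directions of the $(p,q)$ parameterization, so there is no tension between the two minimizations.

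The hardest part is conceptual rather than computational: the conclusion hinges entirely on the assumption that $q$ can represent an arbitrary distribution over $\tilde{\mathbf A}$. In the actual \model implementation, $q$ is a parametric family (factorized categorical, approximated by Gumbel--Softmax), so the proposition should be read as an idealized justification of the objective rather than a claim about realized training dynamics, and I would flag this in a short remark. A minor bookkeeping step is verifying that the $\mathcal L_e$ appearing in Eqn.~\ref{eqn-objective-elbo} matches the edge-level MLE of Eqn.~\ref{eqn-loss-mle} up to an entropy-in-$q$ term, which follows directly from the prior in Eqn.~\ref{eqn-edge-prior}; this is part of the setup and introduces no new difficulty.
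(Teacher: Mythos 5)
Your proposal is correct and follows essentially the same route as the paper's own proof: both rest on the standard identity $\mathcal D\bigl(q(\tilde{\mathbf A}|\mathbf X,\mathbf A)\,\|\,p(\tilde{\mathbf A}|\mathbf Y,\mathbf X,\mathbf A)\bigr) = -\mathrm{ELBO} + \log p(\mathbf Y|\mathbf X,\mathbf A)$, from which the flexibility of $q$ forces the KL gap to vanish at the optimum and the ELBO to coincide with the maximized marginal log-likelihood. The paper derives this identity by expanding the KL to the true posterior directly rather than via Bayes' rule under $\mathbb E_q$ as you do, but the decomposition and the optimality argument are the same.
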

The proposition indicates that our adopted learning objective intrinsically minimizes the divergence between latent graphs generated by the model and the samples from the posterior $p(\tilde{\mathbf A}|\mathbf Y, \mathbf X, \mathbf A)$ that ideally helps to propagate useful adjacent information w.r.t. downstream tasks. Therefore, a well-trained network of \model on labeled data could produce effective latent topology that contributes to boosting the downstream performance.

\section{Conclusion}

This paper proposes a scalable and efficient graph Transformer (especially for node level) that can propagate layer-wise node signals between arbitrary pairs beyond input topology. The key module, a kernelized Gumbel-Softmax operator, enables us to learn layer-specific latent graphs with linear algorithmic complexity without compromising the precision. The results on diverse graph datasets and situations verify the effectiveness, scalability, and stability. We provide more discussions on the limitations and potential impacts in Appendix~\ref{appx-impact}.%While our experiments mainly focus on node-level tasks, it can be extended to other graph-related tasks including link prediction, node regression, sub-graph recognition by using \model as a backbone.%\david{Since there's still a little space left, should we reiterate the difference between prior work on graph classifcation with small graphs, versus our focus on node-level classification on large graphs?}

\section*{Acknowledgement}\vspace{-5pt}
This work was partly supported by National Key Research and Development Program of China (2020AAA0107600), National Natural Science Foundation of China (61972250, 72061127003), and Shanghai Municipal Science and Technology (Major) Project (22511105100, 2021SHZDZX0102).
% \textbf{Broader Impacts.} Graph structure learning also plays as key components in many other perpendicular problems, like explainability~\cite{gnnexplain-neurips19}, adversarial robustness~\cite{gnnguard-neurips20} and training acceleration~\cite{fastgat-20}. \model can also be applied as a general plug-in structure learning module that opens new possibility for uncovering underlying dependence, identifying novel structures and purifying noisy data in large-scale systems.

% In the unusual situation where you want a paper to appear in the
% references without citing it in the main text, use \nocite
{
\bibliographystyle{plain}
\bibliography{ref}
}
\newpage

%%% BEGIN INSTRUCTIONS %%%
% The checklist follows the references.  Please
% read the checklist guidelines carefully for information on how to answer these
% questions.  For each question, change the default \answerTODO{} to \answerYes{},
% \answerNo{}, or \answerNA{}.  You are strongly encouraged to include a {\bf
% justification to your answer}, either by referencing the appropriate section of
% your paper or providing a brief inline description.  For example:
% \begin{itemize}
%   \item Did you include the license to the code and datasets? \answerYes{See Section~\ref{gen_inst}.}
%   \item Did you include the license to the code and datasets? \answerNo{The code and the data are proprietary.}
%   \item Did you include the license to the code and datasets? \answerNA{}
% \end{itemize}
% Please do not modify the questions and only use the provided macros for your
% answers.  Note that the Checklist section does not count towards the page
% limit.  In your paper, please delete this instructions block and only keep the
% Checklist section heading above along with the questions/answers below.
%%% END INSTRUCTIONS %%%
%%%%%%%%%%%%%%%%%%%%%%%%%%%%%%%%%%%%%%%%%%%%%%%%%%%%%%%%%%%%
\section*{Checklist}

% \iffalse
%%% BEGIN INSTRUCTIONS %%%
% The checklist follows the references.  Please
% read the checklist guidelines carefully for information on how to answer these
% questions.  For each question, change the default \answerTODO{} to \answerYes{},
% \answerNo{}, or \answerNA{}.  You are strongly encouraged to include a {\bf
% justification to your answer}, either by referencing the appropriate section of
% your paper or providing a brief inline description.  For example:
% \begin{itemize}
%   \item Did you include the license to the code and datasets? \answerYes{See Section~\ref{gen_inst}.}
%   \item Did you include the license to the code and datasets? \answerNo{The code and the data are proprietary.}
%   \item Did you include the license to the code and datasets? \answerNA{}
% \end{itemize}
% Please do not modify the questions and only use the provided macros for your
% answers.  Note that the Checklist section does not count towards the page
% limit.  In your paper, please delete this instructions block and only keep the
% Checklist section heading above along with the questions/answers below.
% %%% END INSTRUCTIONS %%%
% \fi

\begin{enumerate}

\item For all authors...
\begin{enumerate}
  \item Do the main claims made in the abstract and introduction accurately reflect the paper's contributions and scope?
    \answerYes{}
  \item Did you describe the limitations of your work?
    \answerYes{}
  \item Did you discuss any potential negative societal impacts of your work?
    \answerYes{}
  \item Have you read the ethics review guidelines and ensured that your paper conforms to them?
    \answerYes{}
\end{enumerate}

\item If you are including theoretical results...
\begin{enumerate}
  \item Did you state the full set of assumptions of all theoretical results?
    \answerYes{}
        \item Did you include complete proofs of all theoretical results?
    \answerYes{See Appendix~\ref{appx-proof}}
\end{enumerate}

\item If you ran experiments...
\begin{enumerate}
  \item Did you include the code, data, and instructions needed to reproduce the main experimental results (either in the supplemental material or as a URL)?
    \answerYes{The codes are public available. See Appendix~\ref{appx-dataset} for dataset information.}
  \item Did you specify all the training details (e.g., data splits, hyperparameters, how they were chosen)?
    \answerYes{See Appendix~\ref{appx-implementation}}
        \item Did you report error bars (e.g., with respect to the random seed after running experiments multiple times)?
    \answerYes{See the experiment section}
        \item Did you include the total amount of compute and the type of resources used (e.g., type of GPUs, internal cluster, or cloud provider)?
    \answerYes{See Appendix~\ref{appx-implementation}}
\end{enumerate}

\item If you are using existing assets (e.g., code, data, models) or curating/releasing new assets...
\begin{enumerate}
  \item If your work uses existing assets, did you cite the creators?
    \answerYes{See Appendix~\ref{appx-dataset}}
    \item Did you mention the license of the assets?
    \answerNA{}
  \item Did you include any new assets either in the supplemental material or as a URL?
    \answerNA{}
  \item Did you discuss whether and how consent was obtained from people whose data you're using/curating?
    \answerNA{}
  \item Did you discuss whether the data you are using/curating contains personally identifiable information or offensive content?
    \answerNA{}
\end{enumerate}

\item If you used crowdsourcing or conducted research with human subjects...
\begin{enumerate}
  \item Did you include the full text of instructions given to participants and screenshots, if applicable?
    \answerNA{}
  \item Did you describe any potential participant risks, with links to Institutional Review Board (IRB) approvals, if applicable?
    \answerNA{}
  \item Did you include the estimated hourly wage paid to participants and the total amount spent on participant compensation?
    \answerNA{}
\end{enumerate}

\end{enumerate}

%%%%%%%%%%%%%%%%%%%%%%%%%%%%%%%%%%%%%%%%%%%%%%%%%%%%%%%%%%%%

%%%%%%%%%%%%%%%%%%%%%%%%%%%%%%%%%%%%%%%%%%%%%%%%%%%%%%%%%%%%%%%%%%%%%%%%%%%%%%%
%%%%%%%%%%%%%%%%%%%%%%%%%%%%%%%%%%%%%%%%%%%%%%%%%%%%%%%%%%%%%%%%%%%%%%%%%%%%%%%
% DELETE THIS PART. DO NOT PLACE CONTENT AFTER THE REFERENCES!
%%%%%%%%%%%%%%%%%%%%%%%%%%%%%%%%%%%%%%%%%%%%%%%%%%%%%%%%%%%%%%%%%%%%%%%%%%%%%%%
%%%%%%%%%%%%%%%%%%%%%%%%%%%%%%%%%%%%%%%%%%%%%%%%%%%%%%%%%%%%%%%%%%%%%%%%%%%%%%%
\clearpage
\appendix
\section*{Appendix}

\section{More Details for \model}\label{appx-model}

\subsection{Differentiable Sampling-based Message Passing on Latent Structures}

We provide more details concerning the differentiable sampling-based message passing through our kernelized Gumbel-Softmax operator, as complementary to the content of Sec.~\ref{sec-model-gnn}. As illustrated in Sec.~\ref{sec-model-gnn}, the $l$-th layer's feature propagation is defined over the $l$-th layer's latent graph composed of the sampled edges $e_{uv}^{(l)}\sim \mbox{Cat}(\bm \pi_u)^{(l)}$. For each layer, we sample $K$ times for each node, i.e., there will be $K$ sampled neighbored nodes for each node $u$. We assume $\tilde{\mathcal E}^{(l)} = \{e_{uv}^{(l)}\}$ as the set of sampled edges in the latent graph of the $l$-th layer. Then the updating rule for node embeddings at the $l$-th layer based on the latent graph can be written as
\begin{equation}
    \mathbf z_u^{(l+1)} = \frac{1}{K} \sum_{v, e_{uv}^{(l)}\in \tilde{\mathcal E}^{(l)}} \mathbf v_u = \frac{1}{K} \sum_{v} \mathbb I[e_{uv}^{(l)}\in \tilde{\mathcal E}^{(l)}] \mathbf v_u.
\end{equation}
The above equation introduces dis-continuity due to the sampling process that disables the end-to-end differentiable training. We thus adopt Gumbel-Softmax as a reparameterization trick to approximate the discrete sampled results via continuous relaxation:
\begin{equation}\label{eqn-appx-attn-gumbel-ori}
    \mathbf z_u^{(l+1)} \approx \frac{1}{K}\sum_{k=1}^K \sum_{v=1}^N \frac{\exp((\mathbf q_u^\top \mathbf k_u + g_{kv}) /\tau)}{\sum_{w=1}^N \exp((\mathbf q_u^\top \mathbf k_w + g_{kw})/\tau)} \cdot \mathbf v_u, ~ g_{kw}\sim \mbox{Gumbel}(0, 1).
\end{equation}
The temperature $\tau$ controls the closeness to hard discrete samples~\cite{Concrete-iclr17}. If $\tau$ is close to zero, then the Gumbel-Softmax term $\frac{\exp((\mathbf q_u^\top \mathbf k_u + g_{kv}) /\tau)}{\sum_{w=1}^N \exp((\mathbf q_u^\top \mathbf k_w + g_{kw})/\tau)}$ for any $v$ converges to a one-hot vector:
\begin{equation}
\begin{aligned}
    \frac{\exp((\mathbf q_u^\top \mathbf k_v + g_{kv}) /\tau)}{\sum_{w=1}^N \exp((\mathbf q_u^\top \mathbf k_w + g_{kw})/\tau)}=&\left\{ 
    \begin{array}{ll}
         & 1, \quad \mbox{if} \; v~\mbox{satisfies}~ \mathbf q_u^\top \mathbf k_v + g_{kv} > \mathbf q_u^\top \mathbf k_{v'} + g_{kv'} \forall v' \neq v,  \\
         & 0, \quad otherwise.
    \end{array}
    \right. 
    \end{aligned}
\end{equation}
The Eqn.~\ref{eqn-appx-attn-gumbel-ori} requires $\mathcal O(N^2)$ for computing the embeddings for $N$ nodes in one layer. To reduce the complexity to $\mathcal O(N)$, we resort to the kernel approximation idea, following similar reasoning as Eqn.~\ref{eqn-attn-kernel} and \ref{eqn-attn-rff}:
\begin{equation}
\begin{aligned}
\mathbf z_u^{(l+1)} &\approx \frac{1}{K}\sum_{k=1}^K \sum_{v=1}^N \frac{\exp((\mathbf q_u^\top \mathbf k_u + g_{kv}) /\tau)}{\sum_{w=1}^N \exp((\mathbf q_u^\top \mathbf k_w + g_{kw})/\tau)} \cdot \mathbf v_u \\
    & = \frac{1}{K}\sum_{k=1}^K \sum_{v=1}^N \frac{\exp((\mathbf q_u^\top \mathbf k_u + g_{kv}) /\tau)}{\sum_{w=1}^N \exp((\mathbf q_u^\top \mathbf k_w + g_{kw})/\tau)} \cdot \mathbf v_u \\
    & = \frac{1}{K}\sum_{k=1}^K \sum_{v=1}^N \frac{\kappa(\mathbf q_u / \sqrt{\tau}, \mathbf k_v / \sqrt{\tau}) e^{g_{kv}/\tau}}{\sum_{w=1}^N \kappa(\mathbf q_u / \sqrt{\tau}, \mathbf k_w / \sqrt{\tau}) e^{g_{kw}/\tau}} \cdot \mathbf v_v \\
    & \approx \frac{1}{K}\sum_{k=1}^K \sum_{v=1}^N \frac{\phi(\mathbf q_u / \sqrt{\tau})^\top \phi(\mathbf k_v / \sqrt{\tau}) e^{g_{kv}/\tau}}{\sum_{w=1}^N \phi(\mathbf q_u / \sqrt{\tau})^\top \phi(\mathbf k_w / \sqrt{\tau}) e^{g_{kw}/\tau}} \cdot \mathbf v_v  \\
    & = \frac{1}{K}\sum_{k=1}^K \frac{\phi(\mathbf q_u / \sqrt{\tau})^\top  \sum_{v=1}^N e^{g_{kv}/\tau}\phi(\mathbf k_v / \sqrt{\tau}) \cdot \mathbf v_v^\top  }{ \phi(\mathbf q_u / \sqrt{\tau})^\top \sum_{w=1}^N e^{g_{kw}/\tau}\phi(\mathbf k_w / \sqrt{\tau})}. 
    \end{aligned}
\end{equation}
The above result yields the one-layer updating rule for \model's feed-forwarding w.r.t. each node $u$. In terms of practical implementation, we adopt matrix multiplications for computing the node embeddings for all the nodes in the next layer, for which we present the details in the next subsection.

\subsection{Model Implementation from the Matrix View}

In practice, the implementation of \model is based on matrix operations that simultanenously update all the nodes in one layer. We present the feed-forward process of \model from a matrix view in Fig.~\ref{fig-model} where Alg.~1 depicts how node embeddings are updated in each layer through our introduced kernelized Gumbel-Softmax message passing in Sec.~\ref{sec-model-gnn}. The right sub-figure illustrates the one layer's updating which only requires $\mathcal O(N)$ complexity by avoiding the cumbersome all-pair similarity matrix. 

\begin{figure*}[t!]
\centering
\begin{minipage}[t]{0.49\linewidth}
\centering
\label{fig-alg}
\includegraphics[width=\textwidth]{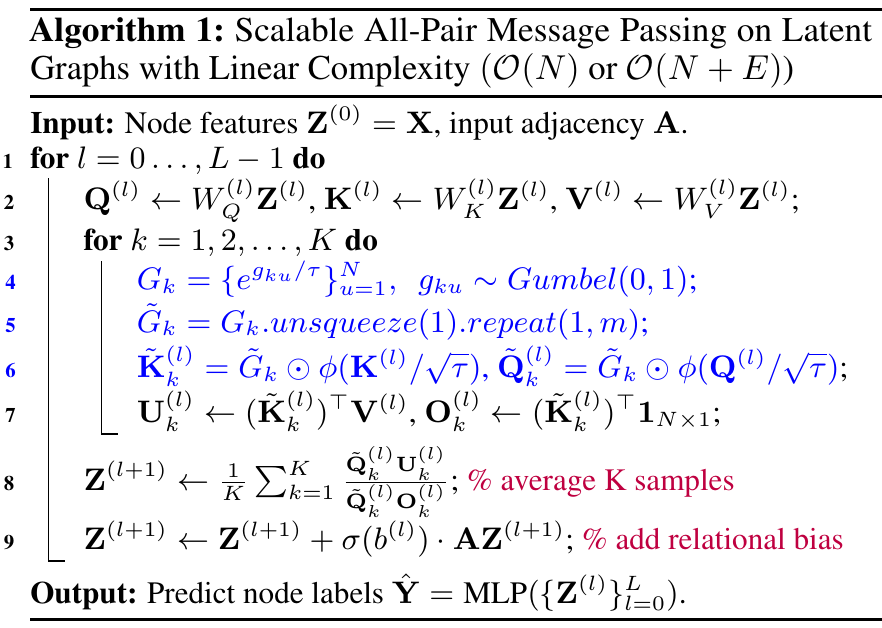}
\end{minipage}%
\hspace{2pt}
\begin{minipage}[t]{0.49\linewidth}
\centering
\label{fig-model}
\includegraphics[width=0.98\textwidth]{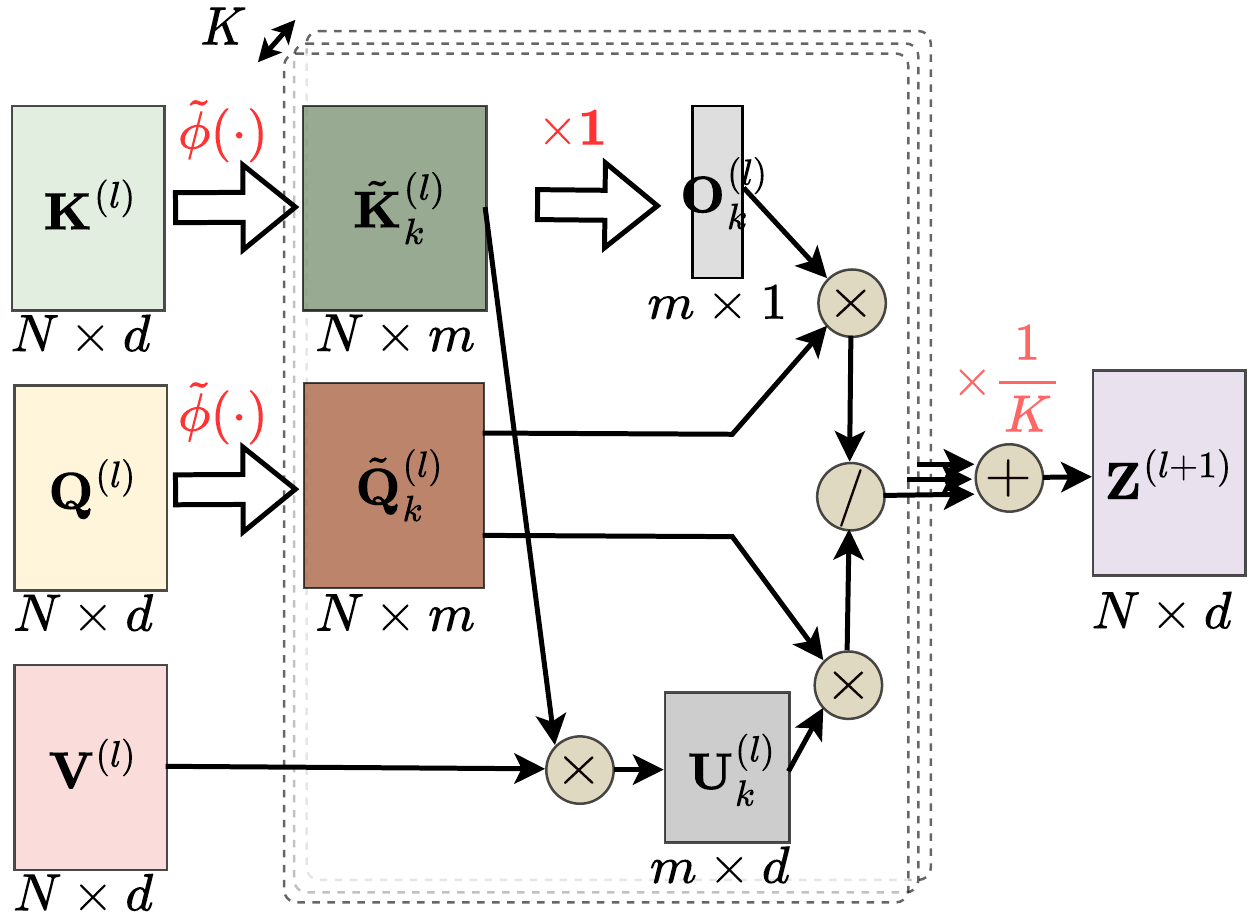}
\end{minipage}%
\caption{Alg.~1 presents the details for \model's feed-forward process from a matrix view that is practically used in our implementation. The figure illustrates the layer-wise node representation updating based on the kernelized Gumbel-Softmax operator, which reduces the algorithmic complexity from quadratic to $\mathcal O(N)$ via avoiding explicit computation of the all-pair similarities. $\odot$ in Alg.~1 denotes element-wise product. $\tilde \phi(\cdot)$ in the figure represents the random feature map with Gumbel noise whose details are shown by the blue part of Alg.~1.}
\label{fig-model}
\end{figure*}

\section{Proof for Technical Results}\label{appx-proof}

\subsection{Proof for Theorem~\ref{thm-error}}

To prove our theorem, we first introduce the following lemma given by the Lemma 2 in~\cite{performer-iclr21}.
\begin{lemma}
\label{lem:kernel}
Denote a softmax kernel as $\mbox{SM}(\mathbf x, \mathbf y) = \exp(\mathbf x^\top \mathbf y)$. The Positive Random Features defined by Eqn.~\ref{eqn-pos-rff} for softmax-kernel estimation, i.e., $\widehat{\mbox{SM}}_m(\mathbf x,\mathbf y) = \frac{1}{m}\sum_{i=1}^m[\exp (\mathbf w_i^\top \mathbf x - \frac{\|\mathbf x\|^2}{2}) \exp(\mathbf w_i^\top \mathbf y - \frac{\|\mathbf y\|^2}{2})]$,
has the mean and variance over $\mathbf w\sim \mathcal N(0, I_d)$ as
\begin{equation}
    \begin{aligned}
    \mathbb{E}_{\mathbf w}(\widehat{\mbox{SM}}_m(\mathbf x,\mathbf y)) = &\mbox{SM}(\mathbf x, \mathbf y) = \exp(\mathbf x^\top \mathbf y), \\
    \mathbb{V}_{\mathbf w}(\widehat{\mbox{SM}}_m(\mathbf x,\mathbf y)) = &\frac{1}{m}\exp(\|\mathbf x+ \mathbf y\|^2)\mbox{SM}^2(\mathbf x,\mathbf y) \\
    &(1-\exp(-\|\mathbf x+ \mathbf y\|^2)).
    \end{aligned}
\end{equation}
\end{lemma}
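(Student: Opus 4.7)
The plan is to compute both moments of the estimator by recognizing that $\widehat{\mbox{SM}}_m(\mathbf x, \mathbf y)$ is the sample mean of $m$ i.i.d.\ random variables, so it suffices to compute the mean and variance of a single summand and then divide the variance by $m$. Concretely, I would define $X_i = \exp(\mathbf w_i^\top \mathbf x - \tfrac{1}{2}\|\mathbf x\|^2)\exp(\mathbf w_i^\top \mathbf y - \tfrac{1}{2}\|\mathbf y\|^2)$, merge the two exponentials to obtain the clean form $X_i = \exp\!\bigl(\mathbf w_i^\top (\mathbf x + \mathbf y) - \tfrac{1}{2}(\|\mathbf x\|^2 + \|\mathbf y\|^2)\bigr)$, and then reduce everything to Gaussian moment generating function (MGF) evaluations.

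For the mean, I would use the fact that $\mathbf w_i^\top (\mathbf x + \mathbf y) \sim \mathcal N(0, \|\mathbf x + \mathbf y\|^2)$, so that $\mathbb E[\exp(\mathbf w_i^\top (\mathbf x + \mathbf y))] = \exp(\tfrac{1}{2}\|\mathbf x + \mathbf y\|^2)$ by the standard Gaussian MGF formula $\mathbb E[e^{tZ}] = e^{t^2 \sigma^2/2}$ applied with $t = 1$. Expanding $\|\mathbf x + \mathbf y\|^2 = \|\mathbf x\|^2 + 2\mathbf x^\top \mathbf y + \|\mathbf y\|^2$ and multiplying by the deterministic normalizer $\exp(-\tfrac{1}{2}(\|\mathbf x\|^2 + \|\mathbf y\|^2))$ collapses the expression to $\exp(\mathbf x^\top \mathbf y) = \mbox{SM}(\mathbf x, \mathbf y)$, which yields the first claim by linearity of expectation.

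For the variance, I would apply the MGF again with $t = 2$ to obtain $\mathbb E[X_i^2] = \exp(2\|\mathbf x + \mathbf y\|^2 - \|\mathbf x\|^2 - \|\mathbf y\|^2)$. Splitting this as $\exp(\|\mathbf x + \mathbf y\|^2) \cdot \exp(\|\mathbf x + \mathbf y\|^2 - \|\mathbf x\|^2 - \|\mathbf y\|^2) = \exp(\|\mathbf x + \mathbf y\|^2)\,\mbox{SM}^2(\mathbf x, \mathbf y)$ (using the identity from the previous paragraph), I would subtract $(\mathbb E[X_i])^2 = \mbox{SM}^2(\mathbf x, \mathbf y)$ to get $\mathbb V(X_i) = \mbox{SM}^2(\mathbf x, \mathbf y)\bigl(\exp(\|\mathbf x + \mathbf y\|^2) - 1\bigr)$, and then factor out $\exp(\|\mathbf x + \mathbf y\|^2)$ to match the target form $\exp(\|\mathbf x + \mathbf y\|^2)\,\mbox{SM}^2(\mathbf x, \mathbf y)\bigl(1 - \exp(-\|\mathbf x + \mathbf y\|^2)\bigr)$. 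Finally, independence of the $\mathbf w_i$'s gives $\mathbb V(\widehat{\mbox{SM}}_m) = \tfrac{1}{m}\mathbb V(X_1)$, which completes the argument.

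There is no real obstacle here beyond careful bookkeeping: the entire proof is a direct application of the Gaussian MGF together with the identity $\|\mathbf x + \mathbf y\|^2 - \|\mathbf x\|^2 - \|\mathbf y\|^2 = 2\mathbf x^\top \mathbf y$, used once at $t = 1$ for the mean and once at $t = 2$ for the second moment. The only point that deserves a sentence of justification is why merging the two exponentials is valid (they share the same $\mathbf w_i$, so this is just scalar exponential arithmetic, not an independence claim), since one might otherwise be tempted to treat the two factors as independent.
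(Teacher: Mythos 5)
Your computation is correct: the Gaussian MGF applied at $t=1$ and $t=2$ to $\mathbf w_i^\top(\mathbf x+\mathbf y)\sim\mathcal N(0,\|\mathbf x+\mathbf y\|^2)$, combined with the identity $\|\mathbf x+\mathbf y\|^2-\|\mathbf x\|^2-\|\mathbf y\|^2=2\mathbf x^\top\mathbf y$ and the i.i.d.\ averaging, yields exactly the stated mean and variance. Note that the paper does not prove this lemma itself --- it imports it verbatim as Lemma~2 of the Performer paper \cite{performer-iclr21} --- so your self-contained derivation is a welcome addition rather than a departure; it is the standard argument underlying that cited result, and your closing remark about why the two exponential factors may be merged (same $\mathbf w_i$, hence pure scalar arithmetic rather than an independence claim) is exactly the right point to flag.
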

The lemma shows that the Positive Random Features can achieve unbiased approximation for the softmax kernel with a quantified variance. 

Back to our main theorem, suppose the L2-norms of $\mathbf q_u$ and $\mathbf k_v$ are bounded by $r$, we can derive the probability using the Chebyshev's inequality:
\begin{equation}
    \label{eq:cheby}
    \begin{aligned}
    \mathbb P (\Delta \leq \sqrt{\frac{\exp(6r/\tau)}{m\epsilon}}) \geq 1-\frac{\mathbb{V}_{\mathbf w}(\widehat{\mbox{SM}}_m(\mathbf q_u/\sqrt{\tau},\mathbf k_v/\sqrt{\tau}))}{\exp(6r/\tau)/m\epsilon}
    \end{aligned}
\end{equation}
where $\Delta = \left | \widehat{\mbox{SM}}_m(\mathbf q_u/\sqrt{\tau}, \mathbf k_v/\sqrt{\tau}) - \mbox{SM}(\mathbf q_u/\sqrt{\tau}, \mathbf k_v/\sqrt{\tau}) \right |$ denotes the deviation of the kernel approximation. Using the result in Lemma~\ref{lem:kernel}, we can further obtain that the RHS of Eqn.~\ref{eq:cheby} is no greater than
\begin{equation}
    1- \epsilon \exp(\|\frac{\mathbf q_u + \mathbf k_v}{\sqrt{\tau}}\|^2+2\frac{\mathbf q_u^\top \mathbf k_v}{\tau} -6\frac{r}{\tau}).
\end{equation}
Since $\|\frac{\mathbf q_u + \mathbf k_v}{\sqrt{\tau}}\|^2 \leq \frac{4r}{\tau}$ and $2\frac{\mathbf q_u^\top \mathbf k_v}{\tau} \leq \frac{2r}{\tau}$, we can achieve the stated result:
\begin{equation}
    \mathbb P(\Delta \leq \sqrt{\frac{\exp(6r/\tau)}{m\epsilon}}) \geq 1-\epsilon.
\end{equation}

\subsection{Proof for Theorem~\ref{thm-gumbel}}

Before entering the proof for the theorem, we first introduce two basic technical lemmas. While such results are already mentioned in previous studies~\cite{gumbel-iclr17, Concrete-iclr17}, their proofs will be useful for the subsequent reasoning. Therefore, we restate the proofs as building blocks for the following presentation.

\label{sec:proofs}
\begin{lemma}
\label{lem:gumbel}
Given real numbers $x_i, x_j\in \mathbb R$ and $u_i, u_j$ i.i.d. sampled from uniform distribution within $(0, 1)$. With Gumbel perturbation defined as $g(u)=-\log(-\log(u))$, we have the probability
$$
P(x_i+g(u_i) > x_j + g(u_j)) = \frac{1}{1+\exp{(-(x_i-x_j))}}. 
$$
\begin{proof}
Due to $g(u)=-\log(-\log(u))$, the inequality of interests $x_i+g(u_i) > x_j + g(u_j)$ can be rearranged as
\begin{equation}\label{eqn-proof-1-1}
    e^{x_i-x_j} > \frac{\log(u_i)}{\log(u_j)}.
\end{equation}
Since $\log(u_j) < 0$, Eqn.~\ref{eqn-proof-1-1} can be written as
\begin{equation}
    u_j < u_i^{e^{x_j-x_i}}.
\end{equation}
As $u_i,u_j$ are i.i.d. sampled from a uniform distribution, the probability when the above formula can be calculated via:
\begin{equation}
\begin{aligned}
    \int_0^1\int_0^{u_i^{e^{x_j-x_i}}}du_jdu_i =& \int_0^1u_i^{e^{x_j-x_i}}du_i\\
    =&\frac{1}{1+\exp(-(x_i-x_j))}.
\end{aligned}
\end{equation}
Thus, we conclude the proof with
\begin{equation}
P(x_i+g(u_i) > x_j + g(u_j)) = \frac{1}{1+\exp{(-(x_i-x_j))}}.   
\end{equation}
\end{proof}
\end{lemma}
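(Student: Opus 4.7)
The plan is to reduce the Gumbel-perturbation inequality $x_i + g(u_i) > x_j + g(u_j)$ to a pointwise algebraic comparison between the two underlying uniform samples, and then compute the resulting probability by a one-dimensional integral over $(0,1)$. Since $g(u) = -\log(-\log u)$ is a strictly monotone transformation of $u$ on $(0,1)$, the event of interest should have a clean representation purely in terms of $u_i$ and $u_j$.

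First, I would substitute the definition of $g$ into the inequality to obtain $x_i - \log(-\log u_i) > x_j - \log(-\log u_j)$, and then isolate the log terms on one side: $\log(-\log u_j) - \log(-\log u_i) > x_j - x_i$. Exponentiating and using $-\log u_i > 0$ (so no sign flip) yields $-\log u_j > e^{x_j - x_i}\,(-\log u_i)$. Multiplying through by $-1$ and then using that $\log u_j < 0$ on $(0,1)$ gives the clean equivalent event $u_j < u_i^{\exp(x_j - x_i)}$. This step is where the main (and only real) subtlety lies: keeping track of inequality directions when negating both sides and when exponentiating. I will make each sign flip explicit.

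Next, writing $\alpha = e^{x_j - x_i}$ for brevity, independence of $u_i, u_j$ together with the uniform law on $(0,1)$ gives
\[
P\bigl(u_j < u_i^{\alpha}\bigr) \;=\; \int_0^1 \int_0^{u_i^{\alpha}} du_j\, du_i \;=\; \int_0^1 u_i^{\alpha}\, du_i \;=\; \frac{1}{1+\alpha}.
\]
Substituting $\alpha = e^{-(x_i - x_j)}$ back in yields exactly the sigmoid expression $\frac{1}{1+\exp(-(x_i - x_j))}$, which is the claimed identity.

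The main obstacle is essentially bookkeeping: getting the direction of the inequality right when dividing by or exponentiating logarithms of quantities in $(0,1)$. There is no need to invoke the extreme-value distribution of $g(U)$ or convolve its density with itself; the monotonicity of $g$ reduces the problem to a trivial power-function integral, and the whole argument takes only a few lines once the reduction to $u_j < u_i^{\alpha}$ is in hand.
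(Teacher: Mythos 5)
Your argument is correct and follows essentially the same route as the paper's own proof: both reduce the event $x_i+g(u_i)>x_j+g(u_j)$ to the pointwise condition $u_j<u_i^{\exp(x_j-x_i)}$ and then evaluate $\int_0^1 u_i^{\exp(x_j-x_i)}\,du_i=\frac{1}{1+\exp(-(x_i-x_j))}$. Your version is slightly more careful about making each sign flip explicit, but the substance is identical.
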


\begin{lemma}
\label{prop:concrete}
Let $X \sim \mbox{Gumbel}(\alpha, \tau)$ (i.e. $X_k=\frac{\exp((\log\alpha_k+g_k)/\tau)}{\sum_{i=1}^{n}\exp((\log\alpha_i+g_i)/\tau)}$) with location parameters $\alpha \in (0,\infty)^n$ and temperature $\tau \in (0, \infty)$, then:
\begin{itemize}
    \item $P(X_k > X_i, \forall i \neq k)=\frac{\alpha_k}{\sum_{i=1}^n \alpha_i}$,
    \item $P(\lim_{\tau\rightarrow0}X_k=1)=\frac{\alpha_k}{\sum_{i=1}^n \alpha_i}$.
\end{itemize}
\end{lemma}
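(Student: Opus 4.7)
The statement is essentially the classical Gumbel-Max identity, so my plan is to reduce both claims to a single computation about the argmax of perturbed log-probabilities. The key observation is that the map $z \mapsto \exp(z/\tau)/\sum_j \exp(z_j/\tau)$ is strictly increasing in each coordinate $z_k$ (with the others held fixed) and shares a common denominator, so the event $\{X_k > X_i\}$ coincides with $\{\log\alpha_k + g_k > \log\alpha_i + g_i\}$ for every $\tau > 0$. This decouples the probability from the softmax and from $\tau$, reducing the first bullet to a calculation purely about independent Gumbel variables.

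\textbf{Step 1 (rewrite the event).} I would first verify the equivalence above and conclude
\[
P(X_k > X_i,\ \forall i \neq k) = P(\log\alpha_k + g_k > \log\alpha_i + g_i,\ \forall i \neq k),
\]
where $g_1,\ldots,g_n$ are i.i.d.\ standard Gumbel, so $g_i$ has density $f(g) = \exp(-g - e^{-g})$ and CDF $F(g) = \exp(-e^{-g})$. Lemma~\ref{lem:gumbel} covers the pairwise case but not the joint event, so I need to handle the conjunction directly.

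\textbf{Step 2 (condition and integrate).} Conditioning on $g_k = g$ and using independence,
\[
P(\log\alpha_k + g_k > \log\alpha_i + g_i,\ \forall i \neq k) = \int_{-\infty}^{\infty} f(g)\prod_{i\neq k} F\bigl(g + \log\alpha_k - \log\alpha_i\bigr)\, dg.
\]
The factor $F(g + \log\alpha_k - \log\alpha_i) = \exp(-(\alpha_i/\alpha_k) e^{-g})$, so the product collapses to $\exp\bigl(-e^{-g}(S/\alpha_k - 1)\bigr)$ where $S = \sum_j \alpha_j$. Combining with $f(g) = e^{-g}\exp(-e^{-g})$ and performing the substitution $t = e^{-g}$ (so $dt = -e^{-g}dg$), the integrand reduces to $\exp(-tS/\alpha_k)\,dt$ on $(0,\infty)$, which evaluates to $\alpha_k/S$. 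This gives the first claim.

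\textbf{Step 3 (the $\tau \to 0$ limit).} For the second bullet, the standard concrete-distribution argument is that as $\tau \to 0$ the softmax concentrates on the argmax: whichever index $k^\star$ maximizes $\log\alpha_i + g_i$ has $X_{k^\star} \to 1$ and all other coordinates tend to $0$ (almost surely no ties occur since the distribution is continuous). Hence the event $\{\lim_{\tau\to 0} X_k = 1\}$ coincides, up to a null set, with $\{\log\alpha_k + g_k > \log\alpha_i + g_i,\ \forall i \neq k\}$, which has probability $\alpha_k/S$ by Step~2. I would justify the interchange of limit and probability simply by noting that the events in question are defined by strict inequalities on the underlying Gumbels and do not depend on $\tau$ at all once the equivalence in Step~1 is in hand.

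\textbf{Expected obstacle.} The only non-routine step is the definite integral in Step~2; once the substitution $t = e^{-g}$ is made, everything is mechanical, but I want to be careful that the product of the $n-1$ CDFs combines cleanly with the one remaining Gumbel density so that the resulting exponential integrand has the correct normalizing constant $\alpha_k/S$. The second bullet is then essentially a corollary of the first once the ``no ties'' measure-zero caveat is dispatched.
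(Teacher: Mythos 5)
Your proof is correct and takes essentially the same route as the paper's: the paper conditions on the uniform variable $u_k$ that generates $g_k$ and integrates the product $\prod_{i\neq k} u_k^{\alpha_i/\alpha_k}$ over $[0,1]$, which is exactly your conditioning on $g_k$ with the product of Gumbel CDFs after the change of variables $u=\exp(-e^{-g})$ (your substitution $t=e^{-g}$ yields the same integral $\int_0^\infty e^{-tS/\alpha_k}\,dt=\alpha_k/S$, without even needing the paper's normalization $\sum_i\alpha_i=1$). The second bullet is handled identically in both arguments, via almost-sure concentration of the softmax on the argmax of $\log\alpha_i+g_i$ as $\tau\to 0$.
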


\begin{proof}
This result can be similarly proved as Lemma~\ref{lem:gumbel}. The event of interests $X_k > X_i, \forall i \neq k$ is equivalent to
\begin{equation}
\label{eq:gumbel}
    \begin{aligned}
        \log\alpha_k - \log(-\log u_k) &> \log\alpha_1 - \log(-\log u_1), \\
        \log\alpha_k - \log(-\log u_k) &> \log\alpha_2 - \log(-\log u_2), \\
        &... \\
        \log\alpha_k - \log(-\log u_k) &> \log\alpha_n - \log(-\log u_n).
    \end{aligned}
\end{equation}
Since all the above inequalities are independent given $u_k$, we can rearrange the first inequality as
\begin{equation}
    u_1 < u_k^{\alpha_1/\alpha_k} \leq 1.
\end{equation}
Since $u_1 \sim U[0,1]$, the probability for the first inequality in Eqn.~\ref{eq:gumbel} being true would be $u_k^{\alpha_1/\alpha_k}$. Thus, the probability for Eqn.~\ref{eq:gumbel} being true can be calculated via
\begin{equation}
    u_k^{\alpha_1/\alpha_k}u_k^{\alpha_2/\alpha_k}...g_k^{\alpha_n/\alpha_k}=g_k^{(\alpha_1+\alpha_2+...+\alpha_n)/\alpha_k}=g_k^{(1/\alpha_k)-1}.
\end{equation}
For simplicity, we assume $\sum_{i=1}^n \alpha_i=1$. Then for any $g_k \in [0,1]$, we obtain
\begin{equation}
\begin{aligned}
    P(X_k > X_i, \forall i \neq k) &= \int_0^1 g_k^{(1/\alpha_k)-1}dg_k \\
    &= \frac{\alpha_k}{\sum_{i=1}^n \alpha_i},
\end{aligned}
\end{equation}
and arrive at the result for the first bullet point.
For the second bullet point, when $\tau\rightarrow0$, we have
\begin{equation}\label{eqn-proof-1-2}
\begin{aligned}
    &\lim_{\tau\rightarrow0}\frac{\exp((\log\alpha_i+g_i)/\tau)}{\exp((\log\alpha_j+g_j)/\tau)}\\
    =&\lim_{\tau\rightarrow0}\exp((\log\alpha_i+g_i-\log\alpha_j-g_j)/\tau) \\
    =&\left\{ 
    \begin{array}{cc}
         & \infty, \quad \mbox{if} \; \alpha_i > \alpha_j  \\
         & 0, \quad otherwise.
    \end{array}
    \right. 
    \end{aligned}
\end{equation}
Such a fact indicates that the output of a Concrete distribution with $\tau \rightarrow 0$ will be a one-hot vector ($X_{\arg\max_i\alpha_i}=1$). This yields the conclusion that
\begin{equation}
    P(\lim_{\tau\rightarrow0}X_k=1)=P(X_k > X_i, \forall i \neq k)=\frac{\alpha_k}{\sum_{i=1}^n \alpha_i}.
\end{equation}
\end{proof}

Now we turn to the proof of our theorem. We are to prove that the kernelized form in Eqn.~\ref{eqn-attn-gumbel-final} has the same property as the original Gumbel-Softmax in the limit sense (when $\tau$ goes to zero). We recall that we have defined $\mathbf q_u = W_Q^{(l)} \mathbf z_u^{(l)}$, $\mathbf k_u = W_K^{(l)} \mathbf z_u^{(l)}$ and $\mathbf v_u = W_V^{(l)} \mathbf z_u^{(l)}$ for simplicity.

First, by definition we have
\begin{equation}
\begin{aligned}
    &\phi(\frac{\mathbf q_u}{\sqrt{\tau}})^\top\phi(\frac{\mathbf k_v}{\sqrt{\tau}})e^{\frac{g_v}{\tau}}\\
    =&\frac{1}{m}\exp(-\frac{||\frac{\mathbf q_u}{\sqrt{\tau}}||^2+||\frac{\mathbf k_v}{\sqrt{\tau}}||^2}{2})\sum_{i=1}^m\exp(\omega_i^\top(\frac{\mathbf q_u}{\sqrt{\tau}}+\frac{\mathbf k_v}{\sqrt{\tau}}) + \frac{g_v}{\tau}).
\end{aligned}
\end{equation}
The property holds that for $\forall w \neq v$, we have $\lim_{\tau\rightarrow0}\frac{\phi(\frac{\mathbf q_u}{\sqrt{\tau}})^\top\phi(\frac{\mathbf k_v}{\sqrt{\tau}})e^{\frac{g_v}{\tau}}}{\phi(\frac{\mathbf q_u}{\sqrt{\tau}})^\top\phi(\frac{\mathbf k_w}{\sqrt{\tau}})e^{\frac{g_w}{\tau}}}$ equals to $\infty$ or 0, i.e. the output of the kernelized Gumbel-Softmax is still a one-hot vector when $\tau\rightarrow0$. Let 
\begin{equation}
    Y_v = \frac{\phi(\frac{\mathbf q_u}{\sqrt{\tau}})^\top\phi(\frac{\mathbf k_v}{\sqrt{\tau}})e^{\frac{g_v}{\tau}}}{\sum_{w=1}^N \phi(\frac{\mathbf q_u}{\sqrt{\tau}})^\top\phi(\frac{\mathbf k_w}{\sqrt{\tau}})e^{\frac{g_w}{\tau}}}.
\end{equation} 
Here $Y_v$ is defined in the same way as $c_{uv}$ in Section~\ref{sec-model-theory}. We thus have $P(\lim_{\tau\rightarrow0}Y_v=1)=P(Y_v > Y_{v'}, \forall v' \neq v)$.

To compute $P(Y_v > Y_{v'}, \forall v'\neq v)$, for simplicity, let us consider the probability $P(Y_v > Y_{v'})=P(\phi(\frac{\mathbf q_u}{\sqrt{\tau}})^\top\phi(\frac{\mathbf k_v}{\sqrt{\tau}})e^{\frac{g_v}{\tau}}>\phi(\frac{\mathbf q_u}{\sqrt{\tau}})^\top\phi(\frac{\mathbf k_{v'}}{\sqrt{\tau}})e^{\frac{g_{v'}}{\tau}})$. To keep notation clean, we define
\begin{equation}
    \beta_v=\phi(\frac{\mathbf q_u}{\sqrt{\tau}})^\top\phi(\frac{\mathbf k_v}{\sqrt{\tau}}),\  \beta_{v'}=\phi(\frac{\mathbf q_u}{\sqrt{\tau}})^\top\phi(\frac{\mathbf k_{v'}}{\sqrt{\tau}}).
\end{equation}
Then the above-mentioned probability can be rewritten as $P(\log\beta_v+\frac{g_v}{\tau} > \log\beta_{v'}+\frac{g_{v'}}{\tau})$, where $\beta_v$ and $\beta_{v'}$ are two i.i.d. random variables.

From Lemma~\ref{lem:kernel}, we have $\mbox{E}(\beta_v)= \exp(\mathbf q_u^\top \mathbf k_v / \tau) = \alpha_v^{\frac{1}{\tau}},\ \mbox{E}(\beta_{v'})= \exp(\mathbf q_u^\top \mathbf k_{v'} / \tau) = \alpha_{v'}^{\frac{1}{\tau}}$, where $\alpha_v$ and $\alpha_{v'}$ are two constant values. Then using Lemma~\ref{lem:gumbel}, we have
\begin{equation}
    \begin{aligned}
    &P(\log\alpha_v^{1/\tau}+\frac{g_v}{\tau}>\log\alpha_{v'}^{1/\tau}+\frac{g_{v'}}{\tau}) \\
    = & P(\log\alpha_v+g_v>\log\alpha_{v'}+g_{v'}) \\
    = &\frac{1}{1+\exp(\log\alpha_{v'}-\log\alpha_v)} \\
    = &\frac{\alpha_{v'}}{\alpha_v+\alpha_{v'}}.
    \end{aligned}
\end{equation}

According to the Chebyshev's inequality, we have $P(|\beta_v-\alpha_v^{\frac{1}{\tau}}|\leq\epsilon_v)\geq 1-\frac{\sigma_v^2}{\epsilon_v^2}$. Here $\sigma_v^2=\mathbb V_{\mathbf w}(\widehat{\mbox{SM}}_m(\frac{\mathbf q_u}{\sqrt{\tau}},\frac{\mathbf k_j}{\sqrt{\tau}}))$, which can given by Lemma~\ref{lem:kernel}. 

Due to the convexity of logarithmic function, we have
\begin{equation}
    \frac{|\log\beta_v-\frac{1}{\tau}\log\alpha_v|}{|\beta_v-\alpha_v^{\frac{1}{\tau}}|} \leq \frac{1}{\alpha_v^{\frac{1}{\tau}}-\epsilon_v},
\end{equation}
and subsequently,
\begin{equation}
    \begin{aligned}
    |\log\beta_v-\frac{1}{\tau}\log\alpha_v| &\leq \frac{|\beta_v-\alpha_v^{\frac{1}{\tau}}|}{\alpha_v^{\frac{1}{\tau}}-\epsilon_v} \\ 
    &\leq \frac{\epsilon_v}{\alpha_v^{\frac{1}{\tau}}-\epsilon_v}.
    \end{aligned}
\end{equation}
Therefore we have $P(|\log\beta_v-\frac{1}{\tau}\log\alpha_v| \leq \frac{\epsilon_v}{\alpha_v^{\frac{1}{\tau}}-\epsilon_v}) \geq P(|\beta_v-\alpha_v^{\frac{1}{\tau}}|\leq\epsilon_v)$. Based on this, we can derive the result:
\begin{equation}
    P(|\log\beta_v-\frac{1}{\tau}\log\alpha_v| \leq \frac{\epsilon_v}{\alpha_v^{\frac{1}{\tau}}-\epsilon_v}) \geq 1 - \frac{\sigma_v^2}{\epsilon_v^2}.
\end{equation}
Since $\beta_v$ and $\beta_{v'}$ are two i.i.d. random variables, we have
\begin{equation}
\begin{aligned}
    P(|\log\beta_v-\frac{1}{\tau}\log\alpha_v| \leq \frac{\epsilon_v}{\alpha_v^{\frac{1}{\tau}}-\epsilon_v}&,\\ 
    |\log\beta_{v'}-\frac{1}{\tau}\log\alpha_{v'}| \leq \frac{\epsilon_{v'}}{\alpha_{v'}^{\frac{1}{\tau}}-\epsilon_{v'}})
    &\geq (1 - \frac{\sigma_v^2}{\epsilon_v^2})(1 - \frac{\sigma_{v'}^2}{\epsilon_{v'}^2}).
\end{aligned}
\end{equation}
For simplicity, we denote $\epsilon = \frac{\epsilon_v}{\alpha_v^{\frac{1}{\tau}}-\epsilon_v}+\frac{\epsilon_{v'}}{\alpha_{v'}^{\frac{1}{\tau}}-\epsilon_{v'}}$ and $P_\epsilon=(1 - \frac{\sigma_v^2}{\epsilon_v^2})(1 - \frac{\sigma_{v'}^2}{\epsilon_{v'}^2})$. We therefore have
\begin{equation}\label{eqn-proof-2-1}
    P(|\log\beta_v-\frac{1}{\tau}\log\alpha_v|+|\log\beta_{v'}-\frac{1}{\tau}\log\alpha_{v'}| \leq \epsilon) \geq P_\epsilon.
\end{equation}
Using the triangular inequality, we can yield
\begin{equation}\label{eqn-proof-2-2}
    \begin{aligned}
    |(\log\beta_v-\frac{1}{\tau}\log\alpha_v)-(\log\beta_{v'}-\frac{1}{\tau}\log\alpha_{v'})| \\ 
    \leq |\log\beta_v-\frac{1}{\tau}\log\alpha_v|+|\log\beta_{v'}-\frac{1}{\tau}\log\alpha_{v'}|.
    \end{aligned}
\end{equation}
Combining Eqn.~\ref{eqn-proof-2-1} and \ref{eqn-proof-2-2}, we have
\begin{equation}
\label{eq:ineq}
    P(|(\log\beta_v-\frac{1}{\tau}\log\alpha_v)-(\log\beta_{v'}-\frac{1}{\tau}\log\alpha_{v'})| \leq \epsilon) \geq P_\epsilon.
\end{equation}
Let $c=\log\beta_v-\log\beta_{v'}$, so that $\mbox{E}(c)=\frac{1}{\tau}(\log\alpha_v-\log\alpha_{v'})$. From Eqn.~\ref{eq:ineq}, we can obtain
\begin{equation}\label{eqn-proof-2-3}
    P(c \geq \mbox{E}(c)-\epsilon) \geq P_\epsilon.
\end{equation}
According to Lemma~\ref{lem:gumbel}, the probability $P(\mbox{E}(c)-\epsilon \geq \frac{g_{v'}-g_v}{\tau})$ can be written as
\begin{equation}\label{eqn-proof-2-4}
\begin{aligned}
    &P(\log\alpha_v-\log\alpha_{v'}-\tau\epsilon \geq g_{v'}-g_v) \\
    &= \frac{1}{1+\exp(\log\alpha_{v'}-\log\alpha_v+\tau\epsilon)} \\
    &= \frac{1}{1+\frac{\alpha_{v'}}{\alpha_v}e^{\tau\epsilon}}.
\end{aligned}
\end{equation}
Since $c, g_v, g_{v'}$ are generated independently, combining Eqn.~\ref{eqn-proof-2-3} and \ref{eqn-proof-2-4}, we can yield
\begin{equation}\label{eqn-proof-2-5}
    P(c \geq \frac{g_{v'}-g_v}{\tau}) \geq \frac{P_\epsilon}{1+\frac{\alpha_{v'}}{\alpha_v}e^{\tau\epsilon}}.
\end{equation}
Similarly, from Eq.~\ref{eq:ineq} we have $P(c\leq \mbox{E}(c)+\epsilon) \geq P_\epsilon$ and subsequently,
\begin{equation}
\begin{aligned}
    P(\frac{g_{v'}-g_v}{\tau} \geq \mbox{E}(c)+\epsilon)&=1-P(c+\epsilon \geq \frac{g_{v'}-g_v}{\tau}) \\
    &= 1 - \frac{1}{1+\frac{\alpha_{v'}}{\alpha_v}e^{-\tau\epsilon}}.
\end{aligned}
\end{equation}
Thus we have $P(c \leq \frac{g_{v'}-g_v}{\tau}) \geq P_\epsilon(1-\frac{1}{1+\frac{\alpha_{v'}}{\alpha_v}e^{-\tau\epsilon}})$ and also
\begin{equation}\label{eqn-proof-2-6}
    P(c \geq \frac{g_{v'}-g_v}{\tau}) \leq 1-P_\epsilon(1-\frac{1}{1+\frac{\alpha_{v'}}{\alpha_v}e^{-\tau\epsilon}}).
\end{equation}
Combining Eqn.~\ref{eqn-proof-2-5} and \ref{eqn-proof-2-6}, we conclude that
\begin{equation}
\label{eq:conclusion}
    \frac{P_\epsilon}{1+\frac{\alpha_{v'}}{\alpha_v}e^{\tau\epsilon}} \leq P(c \geq \frac{g_{v'}-g_v}{\tau}) \leq 1-P_\epsilon(1-\frac{1}{1+\frac{\alpha_{v'}}{\alpha_v}e^{-\tau\epsilon}}).
\end{equation}
Based on this we consider the limitation for two sides and thus obtain
\begin{equation}
    \lim_{P_\epsilon \rightarrow 1}\lim_{\tau\rightarrow 0}P(c \geq \frac{g_{v'}-g_v}{\tau}) = \frac{1}{1+\frac{\alpha_{v'}}{\alpha_v}}=\frac{\alpha_v}{\alpha_v+\alpha_{v'}}.
\end{equation}

Then with similar reasoning as Lemma~\ref{prop:concrete}, we have
\begin{equation}
\begin{aligned}\label{eqn-proof-2-7}
    &\lim_{P_\epsilon \rightarrow 1}\lim_{\tau\rightarrow 0}P(Y_v=1) \\
    =&\lim_{P_\epsilon \rightarrow 1}\lim_{\tau\rightarrow 0}P(Y_v > Y_{v'},\forall v' \neq v)=\alpha_v/(\sum_{w=1}^N\alpha_w).
\end{aligned}
\end{equation}
Recall that
\begin{equation}
    \begin{aligned}
    P_\epsilon&=(1-\frac{\sigma_v^2}{\epsilon_v^2})(1-\frac{\sigma_{v'}^2}{\epsilon_{v'}^2}) \\
    \sigma^2&=\mathbb V_{\mathbf w}(\widehat{SM}_m^+(\mathbf x,\mathbf y))\\
    &=\frac{1}{m}\exp(-\frac{\|\mathbf x\|^2+\|\mathbf y\|^2}{2})\sum_{i=1}^m \exp(\mathbf w_i^\top(\mathbf x+\mathbf y)),
    \end{aligned}
\end{equation}
where $\mathbf x=\frac{\mathbf q_u}{\sqrt{\tau}}, \mathbf y=\frac{\mathbf k_{v,v'}}{\sqrt{\tau}}$. Therefore, $P_\epsilon$ is dependent of the precision $\epsilon_v,\epsilon_{v'}$, the random feature dimension $m$, and the temperature $\tau$. If $m$ is sufficiently large, $\sigma$ would converge to zero and $P_\epsilon$ goes to 1. In such a case, Eqn.~\ref{eqn-proof-2-7} holds once $\tau$ tends to zero. We thus conclude the proof.

% For $e^{\tau\epsilon}$ and $e^{-\tau\epsilon}$ in Eq.~\ref{eq:conclusion}, the bound will be closer with $\tau\epsilon\rightarrow0$. So a better kernel approximation (smaller $\sigma$ requires lager $\tau$) and a better Gumbel-Softmax distribution (smaller $\tau$) are adversarial to the chosen of $\tau$.

% Therefore, with smaller $\tau,\epsilon$ and larger $m$, the kernelized Gumbel-Softmax can approach the ideal Gumbel-Softmax distribution.

\subsection{Proof for Proposition~\ref{prop-var}}
According to our definitions in Section~\ref{sec-analysis}, we have
\begin{equation}
    \begin{split}
        & \mathcal D_{KL}(q_\phi(\tilde{\mathbf A}|\mathbf X, \mathbf A) \| p(\tilde{\mathbf A}|\mathbf Y, \mathbf X, \mathbf A)) \\
        = & \int_{A^*} q_\phi(\tilde{\mathbf A}|\mathbf X, \mathbf A) \log \frac{q_\phi(\tilde{\mathbf A}|\mathbf X, \mathbf A)}{p(\tilde{\mathbf A}|\mathbf Y, \mathbf X, \mathbf A)} d\tilde{\mathbf A}\\
        = & \int_{A^*} q_\phi(\tilde{\mathbf A}|\mathbf X, \mathbf A) \log \frac{q_\phi(\tilde{\mathbf A}|\mathbf X, \mathbf A)p_\theta(\mathbf Y| \mathbf X, \mathbf A)}{p(\tilde{\mathbf A}, \mathbf Y| \mathbf X, \mathbf A)} d\tilde{\mathbf A}\\
        = & \int_{A^*} q_\phi(\tilde{\mathbf A}|\mathbf X, \mathbf A) \log \frac{q_\phi(\tilde{\mathbf A}|\mathbf X, \mathbf A)p_\theta(\mathbf Y| \mathbf X, \mathbf A)}{p(\tilde{\mathbf A}, \mathbf Y| \mathbf X, \mathbf A)} d\tilde{\mathbf A}\\
        = & \int_{A^*} q_\phi(\tilde{\mathbf A}|\mathbf X, \mathbf A) \log \frac{q_\phi(\tilde{\mathbf A}|\mathbf X, \mathbf A)p_\theta(\mathbf Y| \mathbf X, \mathbf A)}{p(\mathbf Y| \tilde{\mathbf A}, \mathbf X, \mathbf A) p(\tilde{\mathbf A}|\mathbf X, \mathbf A)} d\tilde{\mathbf A}\\
        = & - \mathbb E_{q_\phi(\tilde{\mathbf A}|\mathbf X, \mathbf A)} [\log p(\mathbf Y| \tilde{\mathbf A}, \mathbf X, \mathbf A)] + \log p_\theta(\mathbf Y| \mathbf X, \mathbf A) +  \mathcal D_{KL}(q_\phi(\tilde{\mathbf A}|\mathbf X, \mathbf A) \| p(\tilde{\mathbf A}|\mathbf X, \mathbf A)) \\
        = & - \mbox{ELBO}(\theta, \phi) + \log p_\theta(\mathbf Y| \mathbf X, \mathbf A)
    \end{split}
\end{equation}
Since we assume $q_\phi$ can exploit arbitrary distributions over $\tilde{\mathbf A}$, we know that when the ELBO is optimized to the optimum, $\mathcal D_{KL}(q_\phi(\tilde{\mathbf A}|\mathbf X, \mathbf A) \| p(\tilde{\mathbf A}|\mathbf Y, \mathbf X, \mathbf A)) = 0$ holds. Otherwise, there exists $\phi^* \neq \phi$ such that $\mbox{ELBO}(\theta, \phi^*) > \mbox{ELBO}(\theta, \phi)$. Pushing further, when the optimum is achieved, $\log p_\theta(\mathbf Y| \mathbf X, \mathbf A)$ would equal to $\mbox{ELBO}$ and namely is maximized.

\section{Implementation Details}\label{appx-implementation}
We present implementation details in our experiments for reproducibility. For more concrete details concerning architectures and hyper-parameter settings for \model, one can directly refer to our public repository \url{https://github.com/qitianwu/NodeFormer}. We next present descriptions for baseline models' implementation. %We implement our model as well as the baselines with Python 3.8, Pytorch 1.8 and Pytorch Geometric 2.0.%Experiments on transductive node classification are run on a RTX 2080Ti, while the inductive experiments are conducted on a RTX 3090. Moreover, we do experiments on Graph-Enhanced Applications on a Tesla T4.
For baseline models MLP, GCN, GAT, MixHop and JKnet, we use the implementation provided by~\cite{newbench}\footnote{https://github.com/CUAI/Non-Homophily-Benchmarks.}.
For DropEdge and two structure learning baseline models (LDS and IDGL), we also refer to their implementation provided by the original papers~\cite{dropedge-iclr20, IDLS-icml19, IDGL-neurips20}. Concretely, we use GCN as the backbone for them.

\subsection{Details for Node Classification Experiments in Sec.~\ref{sec-exp-trans}}
\textbf{Architectures.} For experiments on the datasets \texttt{Cora}, \texttt{Citeseer}, \texttt{Deezer} and \texttt{Actor}, the baseline models (GCN, GAT, MixHop, JKNet) are implemented with the following settings:
\begin{itemize}[leftmargin=*,itemsep=0pt,topsep=0pt]
    \item Two GNN layers with hidden size 32 by default (unless otherwise mentioned). GAT uses 8 attention heads followed by its original setting.
    \item The activation function is ReLU (except GAT using ELU).
\end{itemize}

The architecture of our \model is specified as follows:
\begin{itemize}[leftmargin=*,itemsep=0pt,topsep=0pt]
    \item Two message-passing layers with hidden size 32. We also consider multi-head designs for our all-pair attentive message passing, and for each head we use independent parameterization. The results for different heads are combined in an average manner in each layer. 
    % \item The model starts with a fully-connected layer, followed by 2 full-graph attention layers, then ends with another fully-connected layer. The hidden size is set to 32, and we use 4 attention heads for the attention layers.
    %\item Residual links~\cite{he2015deep} and LayerNorm~\cite{layernorm-2016} are adopted in \model. Besides, relational bias are also used in the forwarding process for better performance, which can be turned off for ablation study.
    \item The activation function is ELU that is only used for input MLP, and we do not use any activation for the feature propagation layers. In terms of relational bias, we specify $\sigma$ as sigmoid function and consider 2-order adjacency to strengthen the observed links of the input graph.
\end{itemize}

\textbf{Training Details.} In each epoch, we feed the whole data into the model, calculate the loss and conduct gradient descent accordingly. Concretely, we use BCE-Loss for two-class classification and NLL-Loss for multi-class classification, the Adam optimizer is used for gradient-based optimization. The training procedure will repeat the above process until a given budget of 1000 epochs. Finally, we report the test accuracy achieved by the epoch that gives the highest accuracy on validation dataset.

\textbf{Hyperparameters.} For each model, we use grid search on validation set for hyper-parameter setting. The learning rate is searched within $\{0.01, 0.001, 0.0001, 0.00001\}$, weight decay within $\{0.05, 0.005, 0.0005, 0.00005\}$, and dropout probability within $\{0.0, 0.5\}$. The hyper-parameters for \model is provided in our public codes. The hyperparameters for baseline models are set as follows (we use the same notation as the original papers).
\begin{itemize}[leftmargin=*,itemsep=0pt,topsep=0pt]
    \item For GCN and GAT, the learning rate is 0.01, and weight decay is set to 0.05. No dropout is used.
    \item For MixHop, the hyperparameters are the same as above, except that we further use grid search for hidden channels within \{8, 16, 32, 64, 128\}. We adopt 2 hops for all the four datasets.
    \item For JKNet, GCN is used as the backbone. Learning rate is set to 0.01 for \texttt{Deezer} and 0.001 for all the other three datasets. We concatenate the features in the final stage for Deezer, while we use max-pooling for the three other datasets. The hidden size is set as default, except for Deezer as 64.
    \item For DropEdge, the hidden size is chosen from \{32, 64, 96, 128, 160\}, the learning rate is within \{0.01, 0.001, 0.0001\}, and the dropedge rate is chosen from \{0.3, 0.4, 0.5\}.
    \item For LDS, the sampling time $S=16$, the patience window size $\rho=6$, the hidden size $\in$ \{8, 16, 32, 64\}, 
    %the inner learning rate $\gamma \in$ \{1e-4, 2e-2, 5e-2\}, 
    the inner learning rate $\gamma \in$ \{1e-4, 1e-3, 1e-2, 1e-1\}, 
    and the number of updates used to compute the truncated hypergradient $\tau \in$ \{5, 10, 15\}. 
    \item For IDGL, we use its original version without anchor approximation on \texttt{Cora}, \texttt{Citeseer} and \texttt{Actor}. For \texttt{Deezer}, even using anchor approximation, it would also suffer from out-of-memory. Besides, we set: $\epsilon=0.01$, hidden size $\in$ \{16, 64, 96, 128\}, $\lambda \in$ \{0.5, 0.6, 0.7, 0.8\}, $\eta \in$ \{0, 0.1, 0.2\}, $\alpha \in$ \{0, 0.1, 0.2\}, $\beta \in$ \{0, 0.1\}, $\gamma \in$ \{0.1, 0.2\}, $m \in$ \{6, 9, 12\}.
\end{itemize}

\subsection{Details for Node Classification on Larger Graphs in Sec.~\ref{sec-exp-large}}

\textbf{Architectures.} For experiments on the two large datasets (OGB-Proteins and Amazon2M), the baseline models are implemented with the following settings:
\begin{itemize}[leftmargin=*,itemsep=0pt,topsep=0pt]
    \item Three GNN layers with hidden size 64.
    \item The activation function is ReLU (except GAT using ELU).
\end{itemize}

The architecture of our \model is specified as follows:
\begin{itemize}[leftmargin=*,itemsep=0pt,topsep=0pt]
    \item Three message-passing layers with hidden size 64. The head number is set as 1. 
    % \item The model starts with a fully-connected layer, followed by 2 full-graph attention layers, then ends with another fully-connected layer. The hidden size is set to 32, and we use 4 attention heads for the attention layers.
    %\item Residual links~\cite{he2015deep} and LayerNorm~\cite{layernorm-2016} are adopted in \model. Besides, relational bias are also used in the forwarding process for better performance, which can be turned off for ablation study.
    \item The activation function is ELU that is used for all the layers. In terms of relational bias, we specify $\sigma$ as identity function and consider 1-order adjacency to strengthen the observed links of the input graph.
\end{itemize}

\textbf{Training Details.} In each epoch, we use random mini-batch partition to split the whole set of nodes and feed each mini-batch of nodes into the model for all-pair propagation, as we mentioned in Section~\ref{sec-exp-trans}. Similarly, we use BCE-Loss for two-class classification and NLL-Loss for multi-class classification, the Adam optimizer is used for gradient-based optimization. The training procedure will repeat the above process until a given budget of 1000 epochs. The evaluation on testing data is conducted on CPU which enables full-batch feature propagation. Finally, we report the test accuracy achieved by the epoch that gives the highest accuracy on validation dataset.

\subsection{Details for Graph-Enhanced Experiments in Sec.~\ref{sec-exp-app}}
\textbf{Architectures.} The architectures of baselines (GCN, GAT, LDS and IDGL) and \model model are the same as the transductive setting, except that we use grid search to adaptively tune the hidden size. Besides, we also adopt BatchNorm for baseline models.

\textbf{Training Details.} The input data have no graph structures in this setting. As mentioned in Section~\ref{sec-exp-app}, we use $k$-NN for artificially constructing a graph to enable message passing. The training procedure is the same as the transductive setting.

\textbf{Hyperparameters.} The hyperparameters for baseline models are listed as follows.
\begin{itemize}[leftmargin=*,itemsep=0pt,topsep=0pt]
    \item For GCN, the learning rate is 0.01 and the weight decay is 5e-4 on both datasets. The size of hidden channel is set to 64. Dropout is not used during training.
    \item For GAT, the learning rate is 0.001 and the weight decay is 5e-3 on both datasets. The size of hidden channel is 32 on \texttt{Mini-ImageNet} and 64 on \texttt{20News}. No dropout is used during training. The number of attention heads is 8.
    \item For LDS, its hyperparameters are determined by grid search in the same manner as in the transductive setting.
    \item For IDGL, we use its anchor-based version that can scale to these two datasets. Besides, on \texttt{20News}, we set: hidden size=64, learning rate=0.01 $\lambda=0.7$, $\eta=0.1$, $\alpha=0.1$, $\beta=0$, $\gamma=0.1$, $\epsilon=0.01$, $m=12$. On \texttt{Mini-ImageNet}, we set: hidden size=96, learning rate=0.01 $\lambda=0.8$, $\eta=0.2$, $\alpha=0$, $\beta=0$, $\gamma=0.1$, $\epsilon=0.01$, $m=12$.
\end{itemize}

\section{Dataset Information}\label{appx-dataset}

We present detailed information for our used datasets concerning the data collection, preprocessing and statistic information. Table~\ref{tab:dataset} provides an overview of the datasets we used in the experiment.

\subsection{Dataset Information}

\begin{table*}[htbp]
    \centering
    \caption{Information for experiment datasets.}
    \label{tab:dataset}
    \resizebox{0.9\textwidth}{!}{
    \begin{tabular}{lccccccc}
    \hline
    Dataset & Context & Property  & \# Task & \# Nodes & \# Edges & \# Node Feat & \# Class  \\ \hline
    Cora & Citation network & homophilous & 1 & 2,708 & 5,429 & 1,433 & 7 \\
    Citeseer & Citation network & homophilous & 1 & 3,327 & 4,732 & 3,703 & 6 \\
    Deezer & Social network & non-homophilous & 1 & 28,281 & 92,752 & 31,241 & 2 \\
    Actor & Actors in movies & non-homophilous & 1 & 7,600 & 29,926 & 931 & 5 \\ \hline
    OGB-Proteins & Protein interaction & multi-task classification & 112 & 132,534 & 39,561,252 & 8  & 2 \\ 
    Amazon2M & Product co-occurrence & long-range dependence & 1 & 2,449,029 & 61,859,140 & 100  & 47 \\ \hline
    % Twitch-DE & User network & inductive learning & 1 & 9,498 & 153,138 & 2,545 & 2 \\ 
    % FB100-Baylor93 & Social network & inductive learning & 1 & 12,803 & 1,359,634 & 5 & 2 \\ \hline
    Mini-ImageNet & Image classification & no graph/$k$-NN graph & 1 & 18,000 & 0 & 128 & 30 \\
    20News-Groups & Text classification & no graph/$k$-NN graph & 1 & 9,607 & 0 & 236 & 10 \\ \hline
    \end{tabular}
    }
\end{table*}

\textbf{Node Classification Datasets.} For experiments on transductive node classification, we evaluate our model on two homophilous datasets \texttt{Cora} and \texttt{Citeseer}~\cite{Sen08collectiveclassification}, and other two non-homophilous datasets \texttt{Actor}~\cite{geomgcn-iclr20} and \texttt{Deezer}~\cite{deezer-cikm19}. The first two are citation network datasets that contain sparse bag-of-words feature vectors for each document and a list of citation links between documents. The citation links are treated as (undirected) edges and each document has a class label. \texttt{Deezer} is a social network of users on Deezer from European countries, where edges represent mutual follower relationships. The node features are based on artists liked by each user and nodes are labeled with reported gender. \texttt{Actor} is a graph representing actor co-occurrence in Wikipedia pages. Each node corresponds to an actor, and the edge between two nodes denotes co-occurrence on the same Wikipedia page. Node features correspond to some keywords in the Wikipedia pages and the nodes are classified into five categories w.r.t. words of actor's Wikipedia. These four datasets are relatively small with thousands of instances and edges (\texttt{Deezer} is the largest one with nearly 20K nodes).

To evaluate \model's scalability, we further consider two large datasets: \texttt{OGB-Proteins}~\cite{ogb-nips20} and \texttt{Amazon2M}~\cite{clustergcn-kdd19}. These two datasets have million-level nodes and edges and require the model for scalable training. The \texttt{OGB-Proteins} dataset is an undirected, and typed (according to species) graph in which nodes represent proteins and edges indicate different types of biologically meaningful associations between proteins. All edges come with 8-dimensional features, where each dimension represents the approximate confidence of a single association type and takes on values between 0 and 1. The proteins come from 8 species and our task is to predict the presence of 112 protein functions in a multi-label binary classification setup respectively. \texttt{Amazon2M} is extracted from Amazon Co-Purchasing network~\cite{amazoncopurchase-kdd15}, where each node represents a product, and the graph link represents whether two products are purchased together, the node features are generated by extracting bag-of-word features from the product descriptions. The top-level categories are used as labels for the products.

% \textbf{Inductive learning datasets.} For experimens on inductive node classification, we evaluate our model on two non-homophilous datasets: \texttt{Twitch-DE}~\cite{rozemberczki2021multiscale} and \texttt{Facebook100-Baylor93}~\cite{fb2012}. \texttt{Twitch} contains seven networks where users are nodes and mutual friendships between them are edges. Node features are games liked, location and streaming habits. The class labels denote whether a streamer uses explicit language. \texttt{Facebook100} consists of 100 Facebook friendship network snapshots from 2005, each of which has as nodes the Facebook users from a given American university. Each node is labeled with the reported gender of the user. The node features are major, year, high school, etc. Specifically, we use the subgraph \texttt{DE} from \texttt{Twitch} and the subgraph \texttt{Baylor93} from \texttt{Facebook100} for experiments. Both of them have roughly 10k nodes. 

\textbf{Graph-enhanced Application Datasets.} We evaluate our model on two datasets without graph structure: \texttt{20News-Groups}~\cite{pedregosa2011scikit} and \texttt{Mini-ImageNet}~\cite{miniimagenet-neurips2016}. The \texttt{20News} dataset is a collection of approximately 20,000 newsgroup documents (nodes), partitioned (nearly) evenly across 20 different newsgroups. We take 10 classes from 20 newsgroups and use words (TFIDF) with a frequency of more than 5\% as features. The \texttt{Mini-ImageNet} dataset consists of 84$\times$84 RGB images from 100 different classes with 600 samples per class. For our experiment use, we choose 30 classes from the dataset, each with 600 images (nodes) that have 128 features extracted by CNN.

\subsection{Dataset Preprocessing}
All the datasets we used in the experiment are directly collected from the source, except \texttt{Mini-ImageNet}, whose features are extracted by ourselves. Following the setting of \cite{garcia2018fewshot}, we compute node embeddings via a CNN model with 4 convolutional layers followed by a fully-connected layer resulting in a 128 dimensional embedding.
% \begin{itemize}
%     \item 1$\times$\{3$\times$3-conv. layer (64 filters), batch normalization, max pool(2, 2), leaky relu\},
%     \item 1$\times$\{3$\times$3-conv. layer (96 filters), batch normalization, max pool(2, 2), leaky relu\},
%     \item 1$\times$\{3$\times$3-conv. layer (128 filters), batch normalization, max pool(2, 2), leaky relu, dropout(0.5)\},
%     \item 1$\times$\{3$\times$3-conv. layer (256 filters), batch normalization, max pool(2, 2), leaky relu, dropout(0.5)\},
%     \item 1$\times$\{ fc-layer (128 filters), batch normalization\}.
% \end{itemize}
Finally, the 128 dimensional outputs are treated as the features of the nodes (images) for subsequent GNN-based downstream task.

\section{More Experiment Results}\label{appx-result}

We present extra ablation study results on the four transductive datasets for \model w/ and w/o relational bias and edge-level regularization. Fig.~\ref{fig:mt_ablation} studies the impact of the temperature $\tau$ and the dimension of feature map $m$ on \texttt{Cora}. Furthermore, 
we visualize the attention maps of two model layers and compare with original input graphs of \texttt{Cora}, \texttt{Citeseer}, \texttt{Deezer} and \texttt{Actor} in Fig.~\ref{fig:vis-att}.

% \subsection{Extra Ablation Studies}\label{appx-result-abl}
% Some ablation study results have been reported in Section~\ref{sec-exp-further} and here we supplement extra results on more datasets. In Table~\ref{tab:ablation}, we summarize the experiment results on the four transductive datasets for \model w/ and w/o relational bias and edge-level regularization. The results show that both components contribute to positive effects, especially the edge-level regularization, without which we can see a significant drop for the test accuracy on \texttt{Cora} and \texttt{Citeseer}. The reason could be that these two datasets are homophilous graphs where observed edges are reliable for providing beneficial information and in contrast, the other datasets are non-homophilous graphs where the training signal from observed edges might not help a lot and could even be harmful. Similar results and interpretations are applicable for relational bias.

\begin{table}[htbp]
    \centering
    \caption{Ablation study results on transductive datasets, where ``rb'' denotes relational bias and ``reg'' represents edge-level regularization.}
    \label{tab:ablation}
    \resizebox{0.6\linewidth}{!}{
    \begin{tabular}{l|ccc}
    \toprule
    \textbf{Dataset} & \model & \model w/o reg & \model w/o rb  \\ 
    \midrule
    \texttt{Cora} & \textbf{88.69} \std{$\pm$ 0.46} & 81.98 \std{$\pm$ 0.46} & 88.06 \std{$\pm$ 0.59} \\
    \texttt{Citeseer} & \textbf{76.33} \std{$\pm$ 0.59} & 70.60 \std{$\pm$ 1.20} & 74.12 \std{$\pm$ 0.64} \\
    \texttt{Deezer} & \textbf{71.24} \std{$\pm$ 0.32} & 71.22 \std{$\pm$ 0.32} & 71.10 \std{$\pm$ 0.36} \\
    \texttt{Actor} & \textbf{35.31} \std{$\pm$ 1.29} & 35.15 \std{$\pm$ 1.32} & 34.60 \std{$\pm$ 1.32} \\ 
    \bottomrule
    \end{tabular}}
\end{table}

% Fig.~\ref{fig:mt_ablation} shows the impact of the temperature $\tau$ and the dimension of feature map $m$ on \texttt{Cora}. The figure illustrates that the test accuracy increases and then falls with the temperature changing from low to high values (usually achieves the peak accuracy with a temperature of 0.4). Besides, we can see that when the temperature is relatively small, the test accuracy goes high with the dimension of random features increasing. However, when the temperature is large, the accuracy would drop even with large feature dimension $m$.
% Such a phenomenon accords with the theoretical result presented in Section~\ref{sec-model-theory}. For low temperature which enables desirable approximation performance for Gumbel-Softmax, then larger random feature dimension would help to produce better approximation to the original exponentiate-then-dot operator. In contrast, high temperature could not guarantee precise approximation for the original categorical distribution, which deteriorates the performance.
%since Eqn.~\ref{eqn-attn-gumbel-ori} has lost the property of gumbel-softmax. Thus, we can conclude that the ablation study results are fully consistent with Theorem~\ref{thm-gumbel}.
\begin{figure}[htbp]
    \centering
    \includegraphics[width=0.7\linewidth]{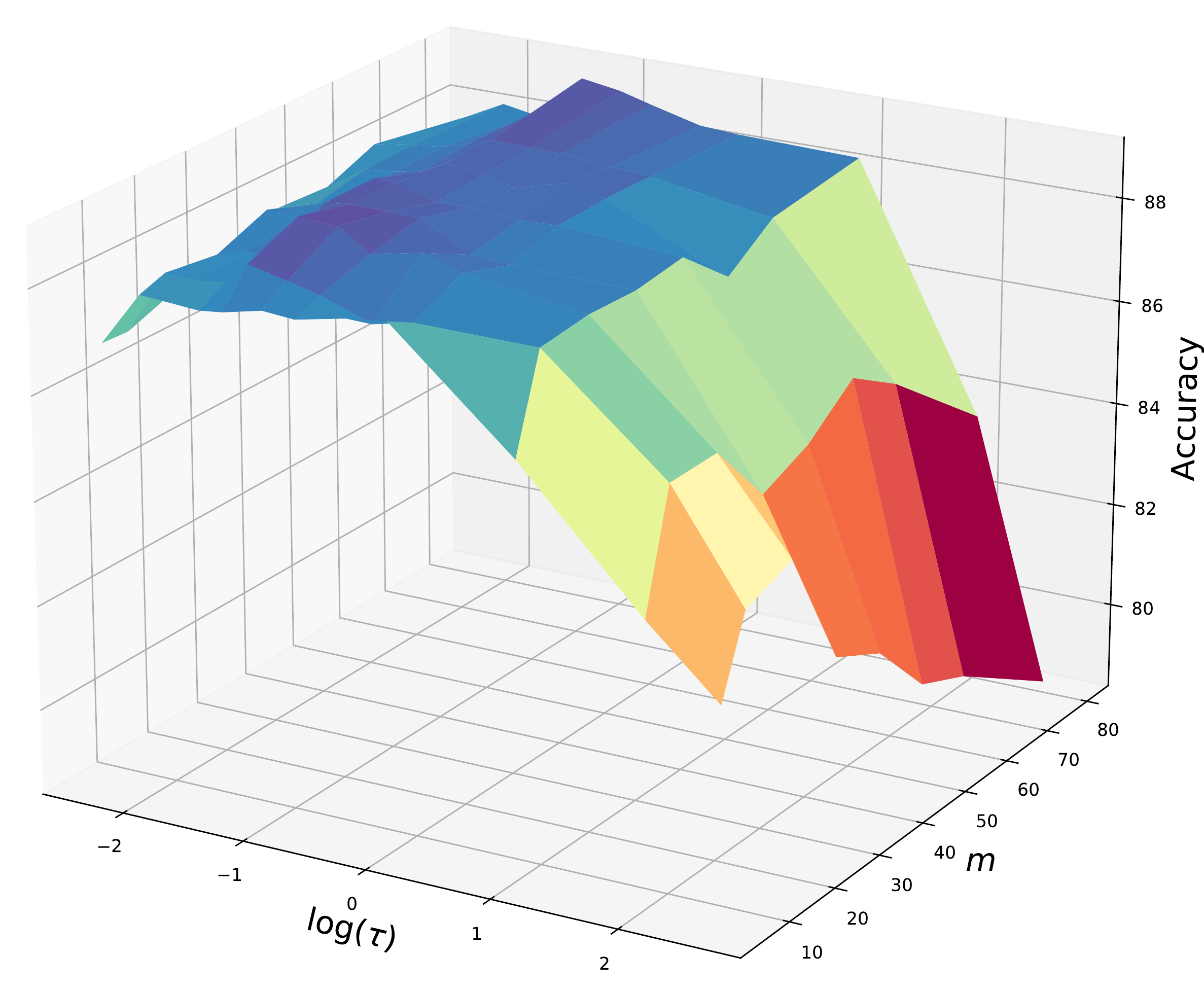}
    \caption{Impact of the temperature $\tau$ and the dimension of random feature map $m$ on \texttt{Cora}.}
    \label{fig:mt_ablation}
\end{figure}

\begin{figure}[htbp]
    \centering
    \includegraphics[width=\textwidth]{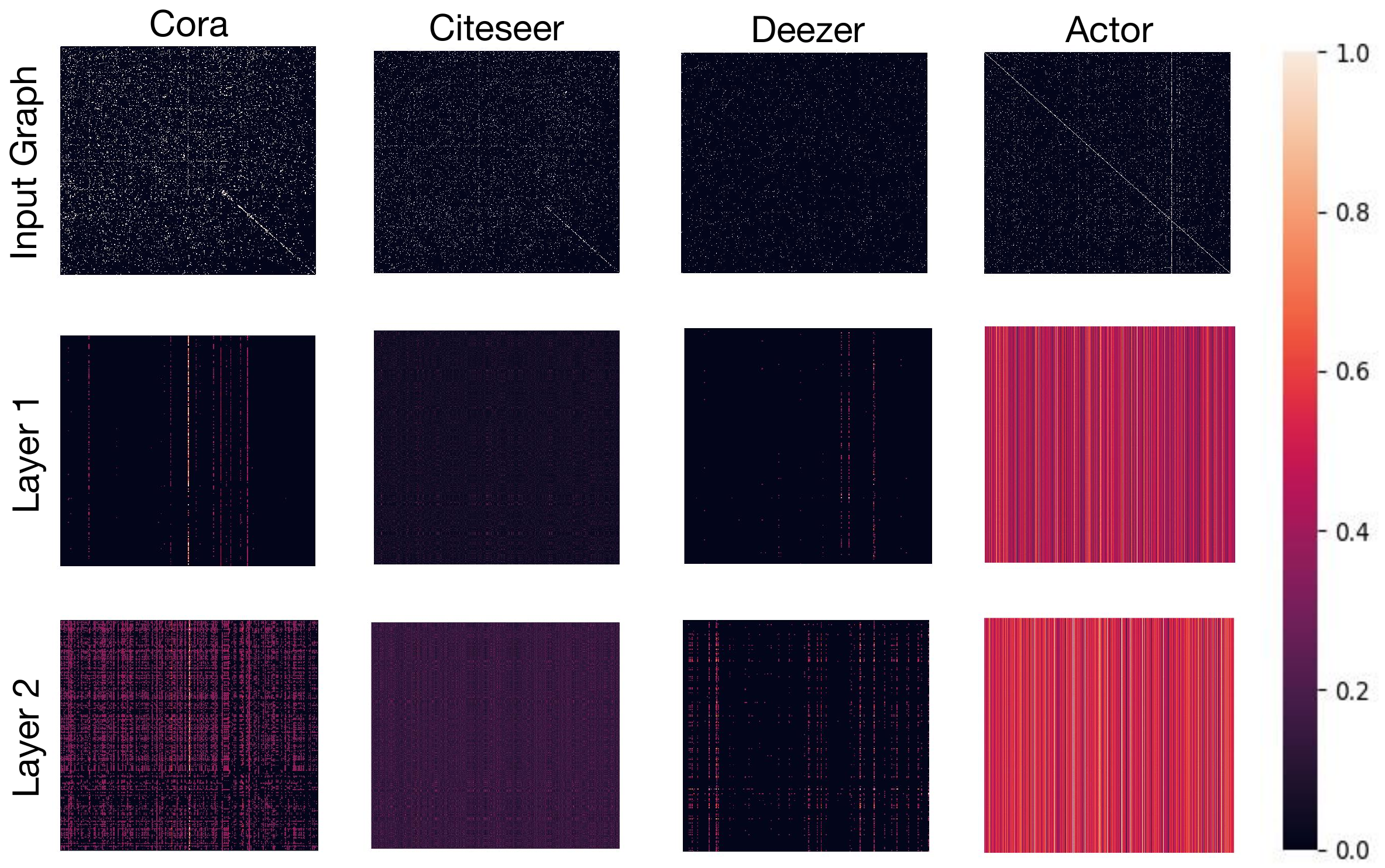}
    \caption{Visualization for input graph structures and latent graph structures (given by two layers of \model) with colors reflecting the weights.}
    \label{fig:vis-att}
\end{figure}

\section{Current Limitations, Outlooks and Potential Impacts}\label{appx-impact}

\textbf{Current Limitations.} In the present work, we focus on node classification for experiments, though \model can be used as a flexible (graph) encoder for other graph-related problems such as graph classification, link prediction, etc. Beyond testing accuracy, social aspects like robustness and explainability can also be considered as the target for future works on top of \model.

\textbf{Potential Impact.} Besides facilitating better node representations via message passing, graph structure learning also plays as key components in many other perpendicular problems in graph learning community, like explainability~\cite{gnnexplain-neurips19}, knowledge transfer and distillation~\cite{geokd}, adversarial robustness~\cite{gnnguard-neurips20}, training acceleration~\cite{fastgat-20}, handling feature extrapolation~\cite{wu2021feature} and cold-start users in recommendation~\cite{wu2021rec}. \model can serve as a plug-in scalable structure learning encoder for uncovering underlying dependence, identifying novel structures and purifying noisy data in practical systems. Another promising direction is to leverage our kernelized Gumbel-Softmax operator as a plug-in module for designing efficient and expressive Transformers on graph data where the large graph size plays a critical performance bottleneck. %Pushing further, \model could be applied to efficiently model the potential inter-dependence among data points for representation learning, which broadly intersects with various learning tasks (e.g., vision, nlp, scientific discovery, etc.) where the connection among data points exists yet remains unobserved. As the number of instances can be arbitrarily large across different datasets, modeling the data dependence is often computationally expensive and prohibitive, and \model can be used to reduce the overhead and opens new possibility.

\end{document}